\documentclass{article} 

\usepackage{amsthm}
\usepackage{amssymb}
\usepackage[a4paper, total={6.5in, 9in}]{geometry}

\usepackage{amsmath,amsfonts,bm}

\def\eqref#1{equation~\ref{#1}}

\def\1{\bm{1}}

\DeclareMathAlphabet{\mathsfit}{\encodingdefault}{\sfdefault}{m}{sl}
\SetMathAlphabet{\mathsfit}{bold}{\encodingdefault}{\sfdefault}{bx}{n}

\newcommand{\E}{\mathbb{E}}

\newcommand{\R}{\mathbb{R}}

\newcommand{\Var}{\mathrm{Var}}

\newcommand{\Cov}{\mathrm{Cov}}

\DeclareMathOperator*{\argmax}{arg\,max}

\usepackage{bm}
\usepackage{hyperref}
\usepackage{url}

\usepackage{natbib}
\newcommand{\doop}{\mathrm{do}}
\newcommand{\CF}{\mathrm{CF}}
\newcommand{\Kb}{\mathbf{K}} 
\DeclareMathOperator{\pa}{pa}
\DeclareMathOperator{\blockdiag}{blockdiag}
\DeclareMathOperator{\diag}{diag}

\usepackage{graphicx}
\usepackage{tikz}
\usetikzlibrary{positioning, arrows.meta, calc}
\definecolor{contcolor}{HTML}{4477AA} 
\definecolor{bincolor}{HTML}{CC6677}  
\definecolor{catcolor}{HTML}{DDCC77}  
\definecolor{ordcolor}{HTML}{117733}  
\tikzset{
  var/.style={circle, draw, minimum size=6mm, inner sep=0pt, font=\small},
  continuous/.style={fill=contcolor!35},
  binary/.style={fill=bincolor!35},
  categorical/.style={fill=catcolor!35},
  ordinal/.style={fill=ordcolor!35},
}
\usepackage{algorithm}
\usepackage{algpseudocode}
\usepackage{tcolorbox}

\theoremstyle{plain}

\newtheorem{proposition}{Proposition}
\newtheorem{lemma}{Lemma}
\newtheorem{corollary}{Corollary}

\theoremstyle{definition}
\newtheorem{definition}{Definition}

\theoremstyle{remark}
\newtheorem{remark}{Remark}
\newtheorem{example}{Example}

\title{Probabilistic Counterfactual Inference for Discrete Outcomes in Gaussian-Process Causal Models}

\usepackage{authblk}
\author[1]{Juliette Sinnott}
\author[2,3]{Amir-Hossein Karimi}
\author[1]{Mohammad Kohandel}
\affil[1]{Department of Applied Mathematics, University of Waterloo, Waterloo, Ontario, N2L 3G1, Canada}
\affil[2]{Department of Electrical and Computer Engineering, University of Waterloo, Waterloo, Ontario, N2L 3G1, Canada}
\affil[3]{Vector Institute for Artificial
Intelligence, Toronto, Ontario, M5G 0C6, Canada}

\date{*Email: jssinnot@uwaterloo.ca}
\begin{document}

\maketitle
\begin{abstract}
    Counterfactual inference in Gaussian-process structural causal models (GP-SCMs) has been developed primarily for continuous endogenous variables, limiting applicability to causal graphs that contain discrete child nodes with continuous parents. We introduce a unified probabilistic framework for counterfactual inference with heterogeneous variable types by pairing GP predictors with explicit exogenous noise mechanisms. For discrete outcomes, we derive exact conditional noise-abduction procedures using a uniform threshold for binary variables, a Gumbel-max race for nominal categories, and a latent Gaussian cut-point model for ordinal ones. In each case, we propagate abducted noise through interventions while accounting for posterior uncertainty in the GP latent functions, and prove that the resulting mechanisms reproduce the fitted model’s observational and interventional distributions. On synthetic SCMs with known ground-truth counterfactuals, we evaluate estimation accuracy, consistency, and robustness to coupling misspecification. A key finding is that applying a categorical coupling to ordinal data inflates counterfactual error roughly threefold even when observational fit remains comparable, and that this error does not diminish with more data.  As the training set grows, the fitted structural equation converges to the truth while the counterfactual error flattens onto a floor. In the reverse direction, forcing a false order onto nominal data instead degrades the fitted equation itself. The choice of coupling must therefore be justified on structural grounds rather than read off the fit.
\end{abstract}
\vspace{-2mm}

\section{Introduction}
\label{sec:intro}

Incorporating causality into Machine Learning can be a crucial step towards developing realistic and explainable models.  \citet{pearl2016causal} has shown that structural causal models (SCMs) can be used to determine the effects of interventions and estimate counterfactual queries. Interventions compute the downstream effects of changing one or more variables in a system.  Counterfactuals extend upon this by using knowledge of a factual instance to compute how the variables would have changed had an intervention been performed.

To incorporate this causal reasoning into a real-world system,  researchers must develop an SCM comprised of a causal graph and structural equations.  The graph structure describes which features are causal parents of others,  which can be determined using domain knowledge and independence tests using observational data \citep{spirtes2000causation,zhang2011kernel,hoyer2009nonlinear}.  For this paper, we assume this step has been completed using one of many graph discovery techniques.  From here, structural equations must be fitted for each parent-child relationship.  Recent work has suggested that Gaussian process (GP) regression can approximate the variety of relationships that occur (linear, non-linear, additive) and provide uncertainty quantification over the resulting counterfactual \citep{DBLP:journals/corr/abs-2006-06831}.  This method has presently only been developed for relationships between real-valued features, and extending it to work for non-continuous child variables requires several adjustments.  We explain how the probabilistic counterfactual estimation changes for binary, categorical and ordinal features.  Our method is proven theoretically and validated experimentally using synthetic SCMs for ground-truth.

\subsection{Counterfactuals}

We define counterfactuals and their evaluation as in \cite{pearl2016causal}:

\begin{tcolorbox}[
    title={Counterfactual Queries},
    colback=black!2,
    colframe=black!60,
    coltitle=white,
    colbacktitle=black!60,
    fonttitle=\bfseries,
    boxrule=0.5pt,
    arc=1mm,
    left=6pt, right=6pt, top=4pt, bottom=4pt
]
Given observation $\mathbf{x}^F$, what would have happened if
$\mathbf{X}_{\mathcal{I}}$ had instead taken the value $\bm{\theta}$? The
counterfactual variable is written
$\mathbf{x}^{\CF}=\mathbf{x}(\doop(\mathbf{X}_{\mathcal{I}}=\bm{\theta}))\mid\mathbf{x}^F$.
\end{tcolorbox}

Given a fully specified SCM $M$ with $d$ nodes and equations
$\mathbf{F}=\{\mathbf{X}_r:=f_r(\mathbf{X}_{\pa(r)},U_r)\}_{r=1}^d$, and an observed
instance $\mathbf{x}^F$, counterfactuals asking what would have happened had we set
$\mathbf{X}_I=\theta$ are evaluated using the following three steps:

\begin{tcolorbox}[
    title={Three Steps for Computing Counterfactuals},
    colback=black!2,
    colframe=black!60,
    coltitle=white,
    colbacktitle=black!60,
    fonttitle=\bfseries,
    boxrule=0.5pt,
    arc=1mm,
    left=6pt, right=6pt, top=4pt, bottom=4pt
]
\textbf{1.} \textit{Abduction}: Use evidence $\mathbf{x}^F$ and the equations to solve backwards for $\{U_r\}_{r=1}^d$.

\textbf{2.} \textit{Action}: Modify the model $M$, by performing the intervention $\doop(\mathbf{X}_I=\theta)$, to get new structural equations $\mathbf{F}^{\doop(\mathbf{X}_{I}=\theta)}$.

\textbf{3.} \textit{Prediction}: Use the modified structural equations and the values of $\{U_r\}_{r=1}^d$ to compute the value of $\mathbf{x}^{\CF}$.
\end{tcolorbox}

Typically, the most challenging step in this process is \textit{Abduction}, because the method to isolate the factual noise depends on the form of the structural equations.  The previous approach uses additive Gaussian noise, which allows for easy noise isolation \cite{DBLP:journals/corr/abs-2006-06831}, however this approach will not work for discrete features.

Consider a bank deciding whether to extend a loan. A continuous feature such
as debt-to-income ratio determines an ordinal risk tier (subprime through
super-prime, from thresholding a composite score), which in turn determines
a nominal decision such as which loan product is offered. An applicant who
is denied may reasonably ask what would have happened had their financial
position been different. Answering this counterfactual query means carrying
Abduction, Action, and Prediction through a causal chain whose structural
equations mix continuous, ordinal, and nominal variables. Existing
approaches are limited to scenarios with only real-valued downstream
variables, leaving systems like the one described here out of their scope
\cite{DBLP:journals/corr/abs-2006-06831}.

\subsection{Existing Approaches}

Prior work falls into three families, by which pipeline stage they address:
\emph{discovering} the graph, \emph{fitting} equations on a given graph, or
\emph{coupling} a fixed interventional law into a counterfactual mechanism.
We address the middle stage: fitting Gaussian Process equations paired with
an \emph{exact} noise-abduction identity. \textbf{Graph discovery} has begun
exploiting ordinal structure to orient edges \citep{ni2022ordinal}, and
\citet{yao2025mixed} use a categorical mechanism close to ours for graph
identifiability in mixed additive noise models, without addressing
counterfactual queries; we instead take the graph as given. For
\textbf{fitting discrete or ordinal equations}, ordinal variables are
classically modelled by discretising a latent score at fixed cutpoints
\citep{mckelvey1975statistical,boes2013nonparametric}, with
\citet{chu2005gaussian} giving the Gaussian Process likelihood our
construction builds on; deep generative alternatives dequantise discrete
variables into normalising-flow SCMs \citep{pawlowski2020deep}, or relax
categorical latents via a temperature-annealed softmax
\citep{jang2017categorical,maddison2017concrete} for gradient-based
training rather than exact abduction, which we use instead. A separate line
instead \textbf{couples a fixed interventional law}:
\citet{oberst2019counterfactual} introduce the Gumbel-max coupling, singled
out axiomatically by \emph{counterfactual stability}, while
\citet{lorberbom2021learning} \emph{learn} a coupling minimising
counterfactual treatment-effect variance over a query distribution; our
interventional law is itself a fitted, uncertain Gaussian Process, and we
choose the coupling to match the endogenous variable's \emph{type}, not to
optimise a downstream objective.

\section{Preliminaries}
\label{gen_inst}

\subsection{Introducing Gumbel Noise}

We introduce the Gumbel distribution because it gives an exact, explicit exogenous
noise variable for categorical outcomes. The same construction
is used to relax categorical sampling into a
differentiable operation for gradient-based training via the Gumbel-softmax/Concrete
distribution \citep{jang2017categorical,maddison2017concrete}.  We instead exploit its
exact (non-relaxed) form to obtain closed-form noise abduction.

\begin{tcolorbox}[
    title={Gumbel Distribution \citep{gumbel1954}},
    colback=black!2,
    colframe=black!60,
    coltitle=white,
    colbacktitle=black!60,
    fonttitle=\bfseries,
    boxrule=0.5pt,
    arc=1mm,
    left=6pt, right=6pt, top=4pt, bottom=4pt
]
\label{def:gumbel}
$G\sim\mathrm{Gumbel}(\mu,1)$ if it has CDF
$F(g)=\exp\bigl(-e^{-(g-\mu)}\bigr)$ on $\R$.
\end{tcolorbox}

We highlight three important properties of this distribution that we will use to evaluate the counterfactuals:
\begin{itemize}
  \item \textbf{Reparameterisation:} $V\sim\mathrm{Uniform}(0,1) \implies
  \mu-\log(-\log V)\sim\mathrm{Gumbel}(\mu,1)$.
  \item \textbf{Truncation} \citep{maddison2014astar}: Conditioned on $G\le b$,
  $$\mu-\log\bigl(e^{-(b-\mu)}-\log V\bigr)\sim\mathrm{Gumbel}(\mu,1) \text{ for }
  V\sim\mathrm{Uniform}(0,1)$$ has the conditioned law.
  \item \textbf{Gumbel-max trick} \citep{yellott1977}: For scores $\bm\ell\in\R^C$ and iid
  $G_1,\ldots,G_C\sim\mathrm{Gumbel}(0,1)$, the perturbed maximum
  $M:=\max_c\{\ell_c+G_c\}$ satisfies
      $$M\sim\mathrm{Gumbel}\Bigl(\log\textstyle\sum_c e^{\ell_c},\,1\Bigr) \quad \text{ independently of } argmax_c\{\ell_c+G_c\}\sim\mathrm{softmax}(\bm\ell)$$

\end{itemize}

\subsection{Coupling and Abduction}
\label{sec:coupling-intro}

When the structural equations are approximated by fitted GPs,  the uncertainty about
$f_r$ means that the exogenous noise driving $X_r$ is never observed directly. Instead, its
posterior is updated (abducted) from the factual observation, and this fixed
abducted noise is carried into the counterfactual world while only the parents change.  For continuous nodes as in \cite{DBLP:journals/corr/abs-2006-06831}, one can use additive Gaussian noise that can be uniquely solved for given a factual instance. 

For a discrete outcome, this uniqueness breaks. The factual and counterfactual
class distributions, $\mathbf{p}^F$ and $\mathbf{p}^{\CF}$, are each pinned
down by the fitted model, but they alone do not determine a joint law over
the pair $(X_r,X_r^{\CF})$: infinitely many joint distributions share the
same two marginals. Call any such choice a \emph{coupling}. Abduction is how
a coupling gets realised in practice: we equip $X_r$ with an explicit
exogenous noise variable, infer its posterior from the factual observation
$x_r^F$, and reuse that same noise, unchanged, to generate $X_r^{\CF}$ under
the counterfactual parents. Because the noise realisation is shared across
both worlds, the resulting joint law (the coupling) is determined entirely by how the exogenous noise is defined and abducted, which is exactly the modelling choice this paper studies for each variable type.

\begin{itemize}
  \item \textbf{Binary.} A binary outcome needs only a single scalar noise
  $U_r\sim\mathrm{Uniform}(0,1)$, thresholded against the class probability. This is the unique coupling consistent with
  two Bernoulli marginals under the natural monotonicity requirement that
  raising the class probability cannot lower $P(X_r=1)$.
  \item \textbf{Categorical.} For $C\ge3$ unordered classes, one natural
  noise is a vector of $C$ independent Gumbel perturbations, one per class,
  abducted jointly from which class won the factual ``race'' and reused for a
  second race against the counterfactual class scores. This is the
  \emph{Gumbel-max coupling}: it treats every pair of classes symmetrically,
  and is the coupling characterised by counterfactual stability
  \citep{oberst2019counterfactual}.
  \item \textbf{Ordinal.} When the classes carry an order, a different noise
  construction is more natural: a single shared Gaussian $N_r$, pushed
  through the factual and counterfactual cutpoints of the same monotone
  threshold function. This is the \emph{comonotone} coupling:
  since the same monotone map is applied to the same noise draw in both
  worlds, it respects the class ordering in a way the Gumbel-max coupling does not.
\end{itemize}

The categorical and ordinal constructions therefore abduct genuinely
different exogenous noise objects (a vector of per-class Gumbels versus a
single Gaussian), and so induce different couplings, even when fit to
variables with the same number of classes. The one case where this
distinction disappears is $C=2$: with only two classes, both constructions
collapse to thresholding a single uniform, so the Gumbel-max and comonotone
couplings coincide exactly, and the choice
between ``categorical'' and ``ordinal'' machinery becomes immaterial.

\section{Results}

Both propositions below instantiate the Compatibility/Abduction/Action-and-Prediction
schema of \S\ref{sec:coupling-intro}; what differs is the noise object each
abducts and how it is reused.

\subsection{Categorical}
 
\begin{proposition}[Categorical Probabilistic Counterfactuals with Gaussian Processes]
\label{prop:gumbel}
Let $\mathbf{f}(\mathbf{x})$ denote a realisation of the fitted GP latent at
$\mathbf{X}_{\pa(r)}=\mathbf{x}$, and let
$\mathbf{p}(\mathbf{x};\mathbf{f}):=\bigl(p_1(\mathbf{x};\mathbf{f}),\ldots,p_C(\mathbf{x};\mathbf{f})\bigr)$
be the normalised class-probability vector it induces. We fit $C$ independent binary
Laplace GP classifiers, one per class, and renormalise:
$p_c := \sigma(f_c)/\sum_{c'}\sigma(f_{c'})$. The joint multiclass softmax fit, under which
$\mathbf{p}=\mathrm{softmax}(\mathbf{f})$ instead, is developed in
\S\ref{sec:laplace-machinery}, and Remark~\ref{rem:ovr} records what differs between the
two; nothing in the construction below depends on the choice, because the coupling is stated
on $\log\mathbf{p}$ rather than on $\mathbf{f}$.  Write
$\bm{\ell}(\mathbf{x}):=\log\mathbf{p}(\mathbf{x};\mathbf{f})$
for the corresponding \emph{log-probability scores}, suppressing $\mathbf{f}$ where no
ambiguity arises.  Model the structural equations as:
\begin{equation}\label{eq:gumbel_scm}
    X_r := \argmax_{c\in\{1,\ldots,C\}}
    \bigl\{ \ell_c(\mathbf{X}_{\pa(r)}) + G_{r,c} \bigr\},
    \qquad
    G_{r,c}\stackrel{iid}{\sim}\mathrm{Gumbel}(0,1),
\end{equation}
 
with $\mathbf{G}_r := (G_{r,1},\ldots,G_{r,C})$.

\textbf{(Compatibility.)}  For every $\mathbf{x}$ and every realisation of the latent,
\begin{equation}\label{eq:gumbel_identity}
    P\bigl(X_r = c \mid \mathbf{X}_{\pa(r)}=\mathbf{x},\,\mathbf{f}\bigr)
    = \mathrm{softmax}\bigl(\log\mathbf{p}(\mathbf{x};\mathbf{f})\bigr)_c
    = p_c(\mathbf{x};\mathbf{f}),
\end{equation}
the softmax identity required by the schema above.

\textbf{(Abduction.)}  Write $\bm{\ell}^F := \bm{\ell}(\mathbf{x}^F_{\pa(r)};\mathbf{f}^F)$
and let $c^\ast := x_r^F$ be the observed class, and $M:=\max_c\{\ell^F_c+G_{r,c}\}$ is the \emph{factual winning score}.  Conditional on $\bm{\ell}^F$ and
on $X_r = c^\ast$, the posterior of $\mathbf{G}_r$ is sampled exactly by
\begin{equation}\label{eq:gumbel_abduct}
    M \sim \mathrm{Gumbel}\Bigl(\log\textstyle\sum_{c}e^{\ell^F_c},\,1\Bigr),
    \quad
    G_{r,c^\ast} := M - \ell^F_{c^\ast},
    \quad
    G_{r,c} := -\log\bigl(e^{-(M-\ell^F_c)} - \log V_c\bigr),\; c\neq c^\ast,
\end{equation}
with $V_c\stackrel{iid}{\sim}\mathrm{Uniform}(0,1)$.

 
\textbf{(Action and Prediction.)}  The intervention replaces
$\mathbf{x}^F_{\pa(r)}$ by $\tilde{\mathbf{x}}_{\pa(r)}$ leaving the latent process and
$\mathbf{G}_r$ fixed, so with
$\tilde{\bm{\ell}}:=\bm{\ell}(\tilde{\mathbf{x}}_{\pa(r)})$,
\begin{equation}\label{eq:gumbel_cf_given_f}
    X_r^{\CF} = \argmax_{c}\bigl\{\tilde\ell_c + G_{r,c}\bigr\}.
\end{equation}
As the schema requires, $(\mathbf{f}^F,\tilde{\mathbf{f}})$, from which
$\bm{\ell}^F$ and $\tilde{\bm{\ell}}$ are formed, are drawn from their
\emph{joint} posterior, which for each class $c$
is the bivariate Gaussian with means $\mathbf{k}_F^\top(\mathbf{e}_{r,c}-\hat{\bm\pi}_{r,c})$ and
$\tilde{\mathbf{k}}^\top(\mathbf{e}_{r,c}-\hat{\bm\pi}_{r,c})$ and covariance
\begin{equation}\label{eq:joint_latent}
    \mathrm{Cov}\bigl(f^F_c,\tilde f_c\bigr)
    = k_r\bigl(\mathbf{x}^F_{\pa(r)},\tilde{\mathbf{x}}_{\pa(r)}\bigr)
      - \mathbf{k}_F^\top\bigl(\mathbf{K}+\hat W_c^{-1}\bigr)^{-1}\tilde{\mathbf{k}},
    \qquad
    \mathbf{k}_F := \bigl(k_r(\mathbf{x}^F_{\pa(r)},\mathbf{x}^i_{\pa(r)})\bigr)_{i=1}^n .
\end{equation}
Under the one-versus-rest fit the $C$ classifiers are independent, so the pairs
$(f^F_c,\tilde f_c)$ are drawn independently across classes. Within a class, however, the
pair must be a \emph{single} draw from \eqref{eq:joint_latent}: $f_{r,c}$ is one function
evaluated at two inputs, and drawing the two marginally would discard the covariance and
with it Corollary~\ref{cor:consistency}.
The counterfactual class probabilities are
\begin{equation}\label{eq:gumbel_cf}
    P\bigl(X_r^{\CF}=c\mid\mathbf{x}^F\bigr)
    = \mathbb{E}_{(\mathbf{f}^F,\tilde{\mathbf{f}})}\,
      \mathbb{E}_{\mathbf{G}_r\mid x_r^F,\mathbf{f}^F}
      \Bigl[\mathbb{I}\bigl\{\argmax_{c'}(\tilde\ell_{c'}+G_{r,c'})=c\bigr\}\Bigr],
\end{equation}
approximated by Monte Carlo over draws of $(\mathbf{f}^F,\tilde{\mathbf{f}})$ and,
for each, a draw of $\mathbf{G}_r$ from \eqref{eq:gumbel_abduct}.
\end{proposition}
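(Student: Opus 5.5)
The plan is to prove the three claims in order: Compatibility, Abduction, Action and Prediction. Throughout we condition on a fixed realisation of the latent $\mathbf{f}$. The Gumbel properties listed in the Preliminaries do almost all of the work.

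\textbf{Compatibility.} Fix $\mathbf{x}$ and $\mathbf{f}$, and set $\bm\ell=\log\mathbf{p}(\mathbf{x};\mathbf{f})$. The Gumbel-max trick gives
\[
\argmax_c\{\ell_c+G_{r,c}\}\sim\mathrm{softmax}(\bm\ell).
\]
Since $\sum_c p_c=1$, we have $\mathrm{softmax}(\log\mathbf{p})_c=p_c/\sum_{c'}p_{c'}=p_c$. The argument only uses that $\mathbf{p}$ is a normalised probability vector, so it holds equally for the one-versus-rest renormalisation and for the softmax fit. This is the sense in which the coupling is stated on $\log\mathbf{p}$.

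\textbf{Abduction.} This step is the crux. The goal is to show that the recipe (\ref{eq:gumbel_abduct}) samples the law of $\mathbf{G}_r$ given $\{X_r=c^\ast\}$. I would change variables from $\mathbf{G}_r$ to $(M,c^\ast)$ together with the shifted scores $Z_c:=\ell^F_c+G_{r,c}$ for $c\ne c^\ast$.

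First, the Gumbel-max trick gives $M\sim\mathrm{Gumbel}(\log\sum_c e^{\ell^F_c},1)$ independently of the argmax. Hence conditioning on $X_r=c^\ast$ leaves the law of $M$ unchanged, which justifies the first part of (\ref{eq:gumbel_abduct}). On that event $Z_{c^\ast}=M$, which forces $G_{r,c^\ast}=M-\ell^F_{c^\ast}$.

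Next, I would compute the joint density directly. On $\{X_r=c^\ast, M\in dm\}$, it is
\[
g(m-\ell^F_{c^\ast})\prod_{c\neq c^\ast}g(z_c-\ell^F_c)\,\mathbb{I}\{z_c\le m\},
\]
where $g$ is the Gumbel$(0,1)$ density. This factorises over $c\ne c^\ast$. Therefore, given $M=m$ and $X_r=c^\ast$, the $Z_c$ are independent $\mathrm{Gumbel}(\ell^F_c,1)$ variables truncated to $z_c\le m$. Ties occur with probability zero, so the strict versus non-strict inequality does not matter.

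Applying the truncation property with $\mu=\ell^F_c$ and $b=m$ yields
\[
Z_c=\ell^F_c-\log\bigl(e^{-(m-\ell^F_c)}-\log V_c\bigr).
\]
Subtracting $\ell^F_c$ gives exactly the stated formula for $G_{r,c}$. The main care point is checking that this joint density factorises as claimed, and that marginalising $m$ against its conditional law recovers the full posterior of $\mathbf{G}_r$.

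\textbf{Action and Prediction.} Given $(\mathbf{f}^F,\tilde{\mathbf{f}})$ and the abducted $\mathbf{G}_r$, the structural equation (\ref{eq:gumbel_scm}) evaluated at $\tilde{\mathbf{x}}_{\pa(r)}$ is (\ref{eq:gumbel_cf_given_f}) by definition of the intervention. Taking expectations by the tower property gives (\ref{eq:gumbel_cf}), with the inner expectation over the abduction posterior and the outer one over the latent posterior.

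It then remains to justify the joint posterior (\ref{eq:joint_latent}) for each class. Under the Laplace approximation, each binary classifier has an approximate Gaussian posterior over the latent at the training inputs, with mode $\hat{\mathbf f}_c$ and precision $\mathbf{K}^{-1}+\hat W_c$. I would invoke standard GP prediction for any pair of test inputs. The predictive mean is $\mathbf{k}_\ast^\top\mathbf{K}^{-1}\hat{\mathbf f}_c=\mathbf{k}_\ast^\top(\mathbf{e}_{r,c}-\hat{\bm\pi}_{r,c})$, using the mode condition. The predictive cross-covariance is $k(\mathbf{x},\mathbf{x}')-\mathbf{k}^\top(\mathbf{K}+\hat W_c^{-1})^{-1}\mathbf{k}'$, which follows from the Woodbury identity. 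These are the forms the paper develops in \S\ref{sec:laplace-machinery}. Independence across classes follows because the one-versus-rest classifiers are fitted separately.

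The Monte Carlo statement is then just the strong law of large numbers applied to the bounded indicator.
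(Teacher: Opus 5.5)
Your proposal is correct and follows essentially the same route as the paper. Compatibility uses the Gumbel-max identity with $\mathrm{softmax}(\log\mathbf{p})=\mathbf{p}$; Abduction splits the event $\{\argmax=c^\ast,\,M\in dm\}$ into a pinned coordinate plus independent truncated Gumbels, which is the paper's Lemma~\ref{lem:truncgumbel}; and Prediction uses the tower rule, Laplace predictive moments and the strong law. The only minor difference is that you derive the moments in \eqref{eq:joint_latent} directly from each one-versus-rest binary Laplace fit, reading $\hat{\bm\pi}_{r,c}$ as $\sigma(\hat{\mathbf f}_{r,c})$, which fits the stated fit more closely than the paper's appeal to the multiclass Lemma~\ref{lem:map}(iv), whose variance needs an extra block-diagonal approximation.
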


\begin{corollary}[Consistency under the trivial intervention]
\label{cor:consistency}
In the setting of Proposition~\ref{prop:gumbel}, intervening at the factual parent value itself,
$\doop\bigl(\mathbf{X}_{\pa(r)}=\mathbf{x}^F_{\pa(r)}\bigr)$, recovers the observed factual outcome
with certainty:
\begin{equation}\label{eq:consistency}
    P\bigl(X_r^{\CF}=x_r^F \mid \mathbf{x}^F,\;\doop(\mathbf{X}_{\pa(r)}=\mathbf{x}^F_{\pa(r)})\bigr)=1.
\end{equation}
\end{corollary}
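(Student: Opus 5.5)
The plan is to show that, for every joint draw of the latents and every abducted noise vector, the counterfactual argmax coincides with the factual class. Averaging the indicator in \eqref{eq:gumbel_cf} over both layers of randomness then gives probability one. The argument rests on two facts. The first is that the joint posterior draw of $(\mathbf{f}^F,\tilde{\mathbf{f}})$ collapses to a single draw when the two inputs coincide. The second is that the abduction in \eqref{eq:gumbel_abduct} places the noise in the event $\{\argmax_c(\ell^F_c+G_{r,c})=c^\ast\}$.

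\textbf{Step 1 (latent collapse).} Set $\tilde{\mathbf{x}}_{\pa(r)}=\mathbf{x}^F_{\pa(r)}$, so that $\tilde{\mathbf{k}}=\mathbf{k}_F$. By \eqref{eq:joint_latent}, for each class $c$ the bivariate Gaussian for $(f^F_c,\tilde f_c)$ has equal means. Its off-diagonal covariance equals each marginal variance $k_r(\mathbf{x}^F,\mathbf{x}^F)-\mathbf{k}_F^\top(\mathbf{K}+\hat W_c^{-1})^{-1}\mathbf{k}_F$. The covariance matrix is therefore $\sigma^2\bm{1}\bm{1}^\top$, which is singular, and its correlation is $1$. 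Hence $\tilde f_c=f^F_c$ almost surely. Because the pair is a \emph{single} joint draw, as the proposition insists, this holds for every $c$, so $\tilde{\bm\ell}=\bm\ell^F$ almost surely. Since $\bm\ell=\log\mathbf{p}$ is a deterministic function of $\mathbf{f}$, this holds for either the one-versus-rest or the softmax fit.

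\textbf{Step 2 (abducted noise realises the factual winner).} Fix $\bm\ell^F$ and a draw from \eqref{eq:gumbel_abduct}. By construction $\ell^F_{c^\ast}+G_{r,c^\ast}=M$. For $c\neq c^\ast$ we have $\ell^F_c+G_{r,c}=\ell^F_c-\log\bigl(e^{-(M-\ell^F_c)}-\log V_c\bigr)$. Since $-\log V_c>0$ almost surely, the argument of the outer log strictly exceeds $e^{-(M-\ell^F_c)}$. This gives $G_{r,c}<M-\ell^F_c$, and so $\ell^F_c+G_{r,c}<M$ strictly. The truncation property of the Gumbel preliminaries expresses the same inequality.

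\textbf{Step 3 (conclude).} Substituting Step 1 into \eqref{eq:gumbel_cf_given_f}, we get $X_r^{\CF}=\argmax_c\{\ell^F_c+G_{r,c}\}$. By Step 2 this argmax is uniquely $c^\ast=x_r^F$ for every $V$ outside a null set. The indicator inside \eqref{eq:gumbel_cf} therefore equals $1$ almost surely for each $(\mathbf{f}^F,\tilde{\mathbf{f}})$. Taking both expectations gives \eqref{eq:consistency}.

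\textbf{Main obstacle.} The delicate point is Step 1. Its content is that the degenerate joint Gaussian forces equality rather than two independent draws with equal marginals; the statement already flags that marginal draws would break the corollary. I would handle it by writing the joint draw as mean plus $L\mathbf{z}$ with $L$ a (pivoted) Cholesky or square-root factor. For the rank-one matrix $\sigma^2\bm{1}\bm{1}^\top$ both coordinates equal $\mu+\sigma z_1$. The ties in Step 2 are a minor issue, since they occur only on the null event $V_c\in\{0,1\}$.
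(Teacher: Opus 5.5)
Your proof is correct and follows the paper's argument step for step. The paper likewise collapses the joint latent draw at $\tilde{\mathbf{x}}_{\pa(r)}=\mathbf{x}^F_{\pa(r)}$ (phrased as $\Var_q[f^F_c-\tilde f_c]=0$ rather than via the rank-one covariance), deduces $\tilde{\bm\ell}=\bm\ell^F$ because $\bm\ell=\log\mathbf{p}$ is a deterministic function of $\mathbf{f}$, and uses the abducted strict inequalities $\ell^F_c+G_{r,c}<M$ for $c\neq c^\ast$ before averaging the indicator; the only cosmetic difference is that you get strictness pathwise from $-\log V_c>0$ in the sampler, whereas the paper gets it from Lemma~\ref{lem:truncgumbel} and the continuity of the truncated Gumbel law.
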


\begin{corollary}[Binary case: closed form]
\label{cor:binary}
Let $C=2$ and put $p(\mathbf{x}) := p_1(\mathbf{x}) = \sigma(f_r(\mathbf{x}))$
with $\sigma(t)=(1+e^{-t})^{-1}$.  Then \eqref{eq:gumbel_scm} is equivalent to
\begin{equation}\label{eq:binary_scm}
    X_r := \mathbb{I}\bigl[\,p(\mathbf{X}_{\pa(r)}) > U_r\,\bigr],
    \qquad U_r\sim\mathrm{Uniform}(0,1),
\end{equation}
the exogenous noise being $U_r=\sigma(G_{r,2}-G_{r,1})$.
 
Abduction is exact:
writing $p^F := p(\mathbf{x}^F_{\pa(r)})$,
\begin{equation}\label{eq:binary_abduct}
    U_r \mid \mathbf{x}^F, f_r \;\sim\;
    \begin{cases}
        \mathrm{Uniform}(0,\,p^F), & x_r^F = 1,\\[2pt]
        \mathrm{Uniform}(p^F,\,1), & x_r^F = 0,
    \end{cases}
\end{equation}
and with $p^{\CF} := p(\tilde{\mathbf{x}}_{\pa(r)})$,
\begin{equation}\label{eq:binary_cf_given_f}
    P\bigl(X_r^{\CF}=1 \mid \mathbf{x}^F, f_r\bigr)
    =
    \begin{cases}
        \min(p^{\CF},\,p^F)\,/\,p^F, & x_r^F = 1,\\[2pt]
        \max(0,\;p^{\CF}-p^F)\,/\,(1-p^F), & x_r^F = 0 .
    \end{cases}
\end{equation}
Marginalising over the joint Laplace posterior of
$(f^F,\tilde f):=\bigl(f_r(\mathbf{x}^F_{\pa(r)}),f_r(\tilde{\mathbf{x}}_{\pa(r)})\bigr)$
gives
\begin{equation}\label{eq:binary_cf}
    P\bigl(X_r^{\CF}=1\mid\mathbf{x}^F\bigr)
    = \mathbb{E}_{(f^F,\tilde f)}\Bigl[\,\eqref{eq:binary_cf_given_f}
      \text{ evaluated at } p^F=\sigma(f^F),\; p^{\CF}=\sigma(\tilde f)\Bigr].
\end{equation}
\end{corollary}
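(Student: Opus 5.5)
The plan is to reduce the two-class Gumbel race in \eqref{eq:gumbel_scm} to a single logistic threshold, and then read off abduction and prediction from elementary properties of the uniform distribution. Throughout I fix a realisation of the latent, as the conditioning on $f_r$ in \eqref{eq:binary_abduct}--\eqref{eq:binary_cf_given_f} indicates. I take $p_1=p$ and $p_2=1-p$, so that $\ell_1-\ell_2=\log\bigl(p/(1-p)\bigr)=\sigma^{-1}(p)$. For a single binary classifier this is the natural reading of the renormalised probabilities; in the paper's convention, class $1$ is the event $X_r=1$.

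\textbf{Equivalence of the structural equations.} With $C=2$, \eqref{eq:gumbel_scm} selects class $1$ iff $\ell_1+G_{r,1}>\ell_2+G_{r,2}$, that is, iff $\sigma^{-1}(p)>G_{r,2}-G_{r,1}$. Ties have probability zero.
\begin{itemize}
\item The difference of two iid $\mathrm{Gumbel}(0,1)$ variables is standard logistic, with CDF $\sigma$. I will verify this either by a one-line convolution or by applying the Gumbel-max trick to two classes.
\item Hence $U_r:=\sigma(G_{r,2}-G_{r,1})\sim\mathrm{Uniform}(0,1)$, by the probability integral transform.
\item Since $\sigma$ is strictly increasing, the event is exactly $\{p>U_r\}$, which gives \eqref{eq:binary_scm}.
\end{itemize}

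\textbf{Abduction.} Conditioning on $X_r=x_r^F$ is conditioning a uniform variable on the interval $\{U_r<p^F\}$ or $\{U_r\ge p^F\}$. This yields \eqref{eq:binary_abduct} directly. I will also remark why this agrees with the exact Gumbel posterior of Proposition~\ref{prop:gumbel}:
\begin{itemize}
\item $U_r$ is a measurable function of $\mathbf{G}_r$.
\item The conditioning event $\{X_r=c^\ast\}$ is $\sigma(U_r)$-measurable.
\item Therefore the pushforward of the abducted $\mathbf{G}_r$ through $\sigma(G_{r,2}-G_{r,1})$ has exactly this truncated-uniform law.
\end{itemize}
This compatibility is the step I expect to need the most care, because the corollary asserts an equivalence of mechanisms, not just of marginals. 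The argument is the measurability one above, together with the fact that the counterfactual \eqref{eq:gumbel_cf_given_f} also depends on $\mathbf{G}_r$ only through $G_{r,2}-G_{r,1}$, and hence only through $U_r$.

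\textbf{Prediction and marginalisation.} Under the intervention, $X_r^{\CF}=\mathbb{I}[p^{\CF}>U_r]$ with the same $U_r$.
\begin{itemize}
\item If $x_r^F=1$: $P(U_r<p^{\CF}\mid U_r<p^F)=\min(p^{\CF},p^F)/p^F$.
\item If $x_r^F=0$: $P(U_r<p^{\CF}\mid U_r\ge p^F)=\max(0,p^{\CF}-p^F)/(1-p^F)$.
\end{itemize}
These are the two cases of \eqref{eq:binary_cf_given_f}.

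Finally, \eqref{eq:binary_cf} follows from the tower property over the joint Laplace posterior of $(f^F,\tilde f)$. This is the same joint draw prescribed by \eqref{eq:joint_latent}, specialised to one class, with $p^F=\sigma(f^F)$ and $p^{\CF}=\sigma(\tilde f)$.
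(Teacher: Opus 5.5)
Your proposal is correct and follows the paper's proof essentially step for step. It uses the same chain: the difference of two Gumbels is logistic, the probability integral transform gives $U_r=\sigma(G_{r,2}-G_{r,1})$, abduction is a truncated uniform, prediction is an interval overlap, and the tower property is taken over the joint Laplace posterior. The differences are minor. You work directly with $\ell_1-\ell_2=\log\bigl(p/(1-p)\bigr)$ rather than with the paper's softmax latents $f_{r,1}-f_{r,2}$ plus shift-invariance, which fits the paper's point that the coupling lives on $\log\mathbf{p}$. Your measurability remark also makes explicit that the uniform abduction agrees with the Gumbel-vector abduction of Proposition~\ref{prop:gumbel}, a link the paper's proof leaves implicit.
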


\subsection{Ordinal}
\label{sec:ordinal}
 
The categorical approach assumes an \emph{unordered} (nominal) set of
classes, thus the Gumbel-max coupling of Proposition~\ref{prop:gumbel} treats every
pair of classes symmetrically. We give the ordinal analogue here, built on the
classical Gaussian-process ordinal-regression likelihood
\citep{chu2005gaussian}. This is exactly
the continuous ANM-GP approach of \cite{DBLP:journals/corr/abs-2006-06831}, observed through a coarsened, binned channel rather than directly.
 
\textbf{Cutpoints as a learned mechanism parameter.} If the cutpoints of a variable are unknown, class boundaries can be inferred from data on the same footing as
the latent function $f_r$ itself. Write $\theta=(\theta_1,\ldots,\theta_{C-1})\in\R^{C-1}$
for an unconstrained parameterisation of the cutpoints,
\begin{equation}\label{eq:theta_reparam}
    b_1(\theta):=\theta_1,\qquad b_c(\theta):=b_{c-1}(\theta)+e^{\theta_c}\quad(c=2,\ldots,C-1),
\end{equation}
under which every $\theta\in\R^{C-1}$ maps to a strictly increasing tuple of
cutpoints and every strictly increasing tuple arises this way, and place an
independent Gaussian prior $\theta\sim p(\theta):=\prod_c\mathcal N(\theta_c;0,\tau_c^2)$,
a priori independent of $f_r$ and of $N_r$.
 
\begin{proposition}[Ordinal Probabilistic Counterfactuals with Gaussian Processes and Learned Cutpoints]
\label{prop:ordinal}
Model the structural equation as
\begin{equation}\label{eq:ordinal_scm}
\begin{split}
    Y_r &:= f_r(\mathbf{X}_{\pa(r)}) + N_r,\quad
    f_r\sim\mathcal{GP}(0,k_r),\quad N_r\sim\mathcal{N}(0,1),\quad \theta\sim p(\theta),\\
    X_r &:= \sum_{c=1}^{C} c\cdot\mathbb{I}\bigl\{b_{c-1}(\theta)\le Y_r< b_c(\theta)\bigr\},
\end{split}
\end{equation}
with $b_0(\theta):=-\infty$, $b_C(\theta):=\infty$, and $(f_r,N_r,\theta)$ mutually
independent a priori.
 
\textbf{(Compatibility.)} Recall that for $N \sim \mathcal{N}(0,1)$ we have the CDF: $\Phi(z) = P(N \le z)$.
For every $\mathbf{x}$ and every realisation of $(f_r,\theta)$,
\begin{equation}\label{eq:ordinal_identity}
    P\bigl(X_r=c\mid\mathbf{X}_{\pa(r)}=\mathbf{x},f_r,\theta\bigr)
    = \Phi\bigl(b_c(\theta)-f_r(\mathbf{x})\bigr)-\Phi\bigl(b_{c-1}(\theta)-f_r(\mathbf{x})\bigr),
\end{equation}
the identity required by the schema above; nothing in it uses $\theta$'s
non-randomness.
 
\textbf{(Abduction.)} Let $c^\ast:=x_r^F$ and, for any realisation of $\theta$,
write $(a^F(\theta),b^F(\theta)):=(b_{c^\ast-1}(\theta),b_{c^\ast}(\theta))$
for the interval implied by the observed class, and
$f_r^F:=f_r(\mathbf{x}^F_{\pa(r)})$. Conditional on $\theta$, on $f_r^F$, and on
$X_r=c^\ast$, the posterior of $N_r$ is exact:
\begin{equation}\label{eq:ordinal_abduct}
    N_r\mid\mathbf{x}^F,f_r^F,\theta,X_r=c^\ast
    \;\sim\; \mathcal{N}(0,1)\ \text{truncated to}\ \bigl(a^F(\theta)-f_r^F,\;b^F(\theta)-f_r^F\bigr),
\end{equation}
by exactly the fixed-cutpoint argument, since $\theta$ enters only through
which interval is being conditioned on. Marginalising over the joint posterior
of $(f_r^F,\theta)$, the noise's posterior is the continuous mixture of
truncated normals obtained by averaging~\eqref{eq:ordinal_abduct} over $\theta$.
 
\textbf{(Action and Prediction.)} The intervention replaces
$\mathbf{x}^F_{\pa(r)}$ by $\tilde{\mathbf{x}}_{\pa(r)}$, leaving the latent
process, $\theta$, and $N_r$ fixed, so with $\tilde f_r:=f_r(\tilde{\mathbf{x}}_{\pa(r)})$,
\begin{equation}\label{eq:ordinal_cf}
    P\bigl(X_r^{\CF}=k\mid\mathbf{x}^F,f_r^F,\tilde f_r,\theta\bigr)
    = \frac{\bigl[\Phi\bigl(\min(\beta,\gamma_k)\bigr)-\Phi\bigl(\max(\alpha,\gamma_{k-1})\bigr)\bigr]_+}
           {\Phi(\beta)-\Phi(\alpha)},
\end{equation}
where $\alpha:=a^F(\theta)-f_r^F$, $\beta:=b^F(\theta)-f_r^F$, $\gamma_k:=b_k(\theta)-\tilde f_r$ and
$(\cdot)_+:=\max(0,\cdot)$. Since $f_r$ is unknown, $(f_r^F,\tilde f_r)$ must
be drawn from their \emph{joint} posterior given $\theta$, approximated as
Gaussian by the Laplace approximation to the ordinal
likelihood~\eqref{eq:ordinal_identity} \citep{chu2005gaussian}, the
single-latent analogue of the joint Laplace posterior in
eq.~\eqref{eq:joint_latent}, and $\theta$ is itself unknown, drawn from its
own Laplace-Gaussian posterior (Lemma~\ref{lem:theta-laplace}). The full
estimator therefore marginalises over both:
\begin{equation}\label{eq:ordinal_cf_marginal}
    P\bigl(X_r^{\CF}=k\mid\mathbf{x}^F\bigr)
    = \mathbb{E}_{\theta}\Bigl[\mathbb{E}_{(f_r^F,\tilde f_r)\mid\theta}\bigl[\,\eqref{eq:ordinal_cf}\,\bigr]\Bigr],
\end{equation}
approximated by nested Monte Carlo: draw $\theta^{(s)}$ from its
Laplace-Gaussian posterior, refit the single-latent joint posterior of
$(f_r^F,\tilde f_r)$ at the resulting cutpoints $\mathbf b(\theta^{(s)})$,
draw from it, and average eq.~\eqref{eq:ordinal_cf} over both layers.
\end{proposition}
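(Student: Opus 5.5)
The plan is to prove the three parts of Proposition~\ref{prop:ordinal} in order: compatibility, abduction, then action and prediction. Everything is done conditionally on a fixed realisation of $(f_r,\theta)$. The marginal statements \eqref{eq:ordinal_cf_marginal} then follow from the tower property, using that $(f_r,N_r,\theta)$ are mutually independent a priori.

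\textbf{Compatibility.} Fix $\mathbf{x}$, $f_r$ and $\theta$. By \eqref{eq:theta_reparam} the cutpoints $b_0(\theta)<b_1(\theta)<\cdots<b_C(\theta)$ are strictly increasing, so the intervals $[b_{c-1},b_c)$ partition $\R$. Hence $X_r=c$ holds exactly when $b_{c-1}(\theta)-f_r(\mathbf{x})\le N_r<b_c(\theta)-f_r(\mathbf{x})$. Because $N_r\sim\mathcal N(0,1)$ is independent of $(f_r,\theta)$, conditioning does not change its law. Its continuity makes the endpoint convention irrelevant, which gives \eqref{eq:ordinal_identity}. The boundary cases $c=1$ and $c=C$ use $\Phi(\mp\infty)=0,1$.

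\textbf{Abduction.} Condition on $\theta$ and $f_r^F$. The event $X_r=c^\ast$ is the event $N_r\in[\alpha,\beta)$ with $\alpha=a^F(\theta)-f_r^F$ and $\beta=b^F(\theta)-f_r^F$. By the same independence, the prior of $N_r$ given $(f_r^F,\theta)$ is still $\mathcal N(0,1)$. Bayes' rule, with a likelihood that is the indicator of this interval, then yields the truncated normal \eqref{eq:ordinal_abduct}. The normaliser $\Phi(\beta)-\Phi(\alpha)$ is strictly positive because $\alpha<\beta$. Note that $\theta$ appears only through the endpoints, so the fixed-cutpoint computation carries over verbatim. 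Averaging over the posterior of $(f_r^F,\theta)$ gives the stated mixture.

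\textbf{Action and prediction.} The intervention changes only the parents; $f_r$, $\theta$ and the abducted $N_r$ stay fixed. So $X_r^{\CF}=k$ holds exactly when $N_r\in[\gamma_{k-1},\gamma_k)$ with $\gamma_k=b_k(\theta)-\tilde f_r$. The counterfactual probability is therefore the truncated-normal mass of $[\alpha,\beta)\cap[\gamma_{k-1},\gamma_k)$. This intersection is the interval $[\max(\alpha,\gamma_{k-1}),\min(\beta,\gamma_k))$, possibly empty, which gives the positive-part numerator in \eqref{eq:ordinal_cf}.

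\textbf{Main obstacle.} The delicate step is justifying the outer marginalisation \eqref{eq:ordinal_cf_marginal}. Two points need care.
\begin{enumerate}
\item The pair $(f_r^F,\tilde f_r)$ must be drawn jointly, and conditionally on $\theta$. It is one function evaluated at two inputs, and its posterior depends on the cutpoints through the likelihood.
\item The formula must be averaged under the posterior of $(f_r,\theta)$ given the training data and $\mathbf{x}^F$. Here I would use that, conditional on $(f_r,\theta)$, $N_r$ is independent of the training data. The training rows carry their own noises, so the factual row's abduction factorises as above.
\end{enumerate}
Writing the joint posterior as $p(\theta\mid\mathcal D)\,p(f_r\mid\theta,\mathcal D)$ and applying the law of total probability then gives the nested expectation exactly. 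The Laplace approximations for $\theta$ (Lemma~\ref{lem:theta-laplace}) and for $f_r$ given $\theta$ enter only in how this expectation is approximated, not in the identity itself.
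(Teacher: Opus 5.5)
Your proposal is correct and follows the paper's proof essentially step for step. Conditional on $\theta$, the cutpoint partition and the independence of $N_r$ give compatibility, Bayes' rule with an interval-indicator likelihood gives the truncated normal, the standard-normal mass of $(\alpha,\beta)\cap(\gamma_{k-1},\gamma_k)$ gives the prediction, and the tower rule handles the averaging over $(f_r,\theta)$. The only difference is the last step: the paper proves that the nested Monte Carlo estimator is unbiased and converges almost surely, whereas you justify the nested expectation itself via $p(\theta\mid\mathcal D)\,p(f_r\mid\theta,\mathcal D)$ --- exact provided $\mathcal D$ contains the factual row (the paper's case $F\in[n]$), since for a fresh $\mathbf x^F$ the training-data posterior would also have to be reweighted by the factual likelihood $\Phi(\beta)-\Phi(\alpha)$.
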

 
\begin{corollary}[Fixed cutpoints]
\label{cor:ordinal-fixed}
When the cutpoints
$\mathbf b=\mathbf b(\theta_0)$ are known rather than inferred, this means $p(\theta)=\delta_{\theta_0}$ is a point mass and the outer
expectation in~\eqref{eq:ordinal_cf_marginal} collapses to a single term at
$\theta=\theta_0$.  Then Proposition~\ref{prop:ordinal} reduces exactly to fitting
a single-latent GP-ordinal model at the given, fixed $\mathbf b$: this is the
case used throughout Chu and Ghahramani's original construction \citep{chu2005gaussian} and, prior to this section, throughout this paper.
\end{corollary}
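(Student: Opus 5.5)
The plan is to substitute the point-mass prior $p(\theta)=\delta_{\theta_0}$ into each clause of Proposition~\ref{prop:ordinal} and check that every object it defines becomes the corresponding fixed-cutpoint object of \citet{chu2005gaussian}. The structure is simple: the cutpoints enter the model only through $\mathbf b(\theta)$, and $\theta$ is a priori independent of $(f_r,N_r)$. So conditioning on $\theta=\theta_0$ should change nothing except replacing $\mathbf b(\theta)$ by $\mathbf b$ throughout.

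\textbf{Step 1: the posterior over $\theta$ stays a point mass.} By Bayes' rule, the posterior of $\theta$ is proportional to $p(\theta)\,p(\mathcal D\mid\theta)$. With a Dirac prior this product is again $\delta_{\theta_0}$, provided the marginal likelihood at $\theta_0$ is positive. That holds because each class probability in \eqref{eq:ordinal_identity} is strictly positive when the cutpoints are strictly increasing. Hence the outer expectation $\mathbb E_\theta[\,\cdot\,]$ in \eqref{eq:ordinal_cf_marginal} reduces to evaluating its integrand at $\theta_0$. For the Monte Carlo version, every draw $\theta^{(s)}$ equals $\theta_0$, so no refitting across draws is needed.

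\textbf{Step 2: the remaining clauses specialise pointwise.} Conditional on $\theta_0$:
\begin{itemize}
\item The generative model \eqref{eq:ordinal_scm} is exactly the Chu--Ghahramani model, a latent GP plus standard Gaussian noise binned at $\mathbf b$.
\item Compatibility \eqref{eq:ordinal_identity} is their ordinal likelihood.
\item The abduction \eqref{eq:ordinal_abduct} is a truncated normal on $(b_{c^\ast-1}-f_r^F,\,b_{c^\ast}-f_r^F)$. The "mixture over $\theta$" has only one component.
\item The inner joint posterior of $(f_r^F,\tilde f_r)\mid\theta_0$ is the Laplace approximation to the single-latent GP-ordinal posterior at fixed $\mathbf b$.
\end{itemize}
Therefore \eqref{eq:ordinal_cf_marginal} becomes $\mathbb E_{(f_r^F,\tilde f_r)}[\eqref{eq:ordinal_cf}]$ evaluated with $\alpha,\beta,\gamma_k$ formed from $\mathbf b$.

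\textbf{Main obstacle.} The only delicate point is formal. A Dirac prior has no density, so the Gaussian prior on $\theta$ and the Laplace-Gaussian posterior of Lemma~\ref{lem:theta-laplace} are not literally defined in this case. I would handle this in one of two ways:
\begin{itemize}
\item Read the statement directly at the level of measures. Conditioning on an almost-surely constant independent variable leaves the conditional law of $(f_r,N_r)$ given data unchanged, and yields $\theta=\theta_0$ almost surely.
\item Alternatively, take $\tau_c\to0$ with the prior mean moved to $\theta_0$. The posterior over $\theta$ then concentrates at $\theta_0$. Since the integrand of \eqref{eq:ordinal_cf_marginal} is bounded and continuous in $\theta$ (through $\Phi$ and $\mathbf b(\theta)$), dominated convergence gives the collapsed expression.
\end{itemize}
I would present the first argument and mention the second as a consistency check.
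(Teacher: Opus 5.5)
Your proposal is correct and follows the paper's own argument: the point-mass prior forces every outer draw $\theta^{(s)}=\theta_0$, so the outer expectation in \eqref{eq:ordinal_cf_marginal} collapses, leaving a single Laplace fit of $f_r$ at $\mathbf b(\theta_0)$ followed by the ordinary Monte Carlo average over $(f_r^F,\tilde f_r)$. The only difference is in handling the degenerate Laplace step. The paper simply asserts $\hat\theta=\theta_0$, $\Sigma_\theta=0$ in Lemma~\ref{lem:theta-laplace}, whereas your measure-level conditioning bypasses that step more cleanly (in your optional $\tau_c\to0$ check, note that continuity in $\theta$ also runs through the refitted Laplace posterior $\hat{\mathbf f}(\theta)$, $\hat W(\theta)$, not only through $\Phi$ and $\mathbf b(\theta)$).
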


 \begin{remark}[Which coupling each proposition selects]
\label{rem:couplings}
\S\ref{sec:coupling-intro} sketches which coupling each proposition selects;
here we make Proposition~\ref{prop:ordinal}'s precise. Fix a realisation of
$\theta$ and write $H(y):=\sum_c c\cdot\mathbb{I}\{b_{c-1}(\theta)\le
y<b_c(\theta)\}$, non-decreasing. Eq.~\eqref{eq:ordinal_scm} gives
$X_r=H(f_r^F+N_r)$ and $X_r^{\CF}=H(\tilde f_r+N_r)$, both non-decreasing
functions of the \emph{same} scalar; with $U:=\Phi(N_r)$ this realises
$(X_r,X_r^{\CF})$ as $\bigl(F_p^{-1}(U),F_q^{-1}(U)\bigr)$ for the factual and
counterfactual class distributions $p,q$ --- the comonotone coupling
(matching the monotonicity assumption used for identification in the binary
\citep{pearl1999probabilities} and categorical \citep{lu2020sample}
settings).

At $C=2$ this coincides with the Gumbel-max coupling of
Proposition~\ref{prop:gumbel}: for Bernoulli marginals the comonotone
coupling is the unique one, which is why Corollary~\ref{cor:binary} admits a
closed form where the general case does not --- but only as couplings. The
link functions still differ (probit here, logit in
Corollary~\ref{cor:binary}), so the two propositions agree at $C=2$ as
couplings, not as fitted models, and they diverge for $C\ge3$ (the
Experiments section measures the cost). The identification above also holds
only at a fixed $\theta$: after marginalising over the joint posterior of
$(f_r^F,\tilde f_r,\theta)$ in~\eqref{eq:ordinal_cf_marginal}, the estimator
is a mixture of comonotone couplings over varying marginals, which need not
itself be comonotone.
\end{remark}

\section{Experiments}
\label{sec:scm_suite}

\subsection{Setup}
Two fully worked examples in Appendix~\ref{app:worked-example}
(Examples~\ref{ex:gumbel_numeric} and~\ref{ex:gumbel_gp_fit}, one by hand,
one against a fitted GP) walk through Abduction, Action and Prediction on a
single SCM whose \emph{true} mechanism happens to be exactly the Gumbel-max
form of eq.~\eqref{eq:gumbel_scm}. To conduct a more thorough survey of our method, we use
categorical and ordinal approaches
(Algorithm~\ref{alg:gumbel} and Algorithm~\ref{alg:ordinal}) to compute counterfactuals on varying causal graphs depicted in Figure \ref{fig:scm_levels}.  The true structural equations of the graphs are known to us,  but for our experiments we fit a \texttt{sklearn} Gaussian Process classifier to synthetic data generated from the true SCMs.  The SCMs vary in both structural complexity and in the datatype of the endogenous nodes. We deliberately test whether each method's accuracy is confined to the case it was derived for. Using the Binary SCM, we explore the claim, from Remark~\ref{rem:couplings}, that both the categorical and ordinal approaches are equivalent.

These four structures can naturally occur in many systems, for example in the loan scenario (\S\ref{sec:intro}). The Binary SCM could be a bank's approve/deny decision
from debt-to-income ratio; the Single-parent SCM, an applicant's routing to one of
several loan products via an argmax over per-product suitability scores;
the Interaction SCM, a risk tier from thresholding
debt-to-income and loan-to-value non-additively; and the Chain SCM, the full pipeline, where debt-to-income sets the tier and both together determine which product is offered.

For each SCM we draw $n=500$ training samples, fit the
classifier(s) with no knowledge of the true structural equations, and
evaluate on $8$ factual/intervention scenarios.

During the GP fitting, we bound each RBF length scale to $[0.1\hat\sigma_d,\,10\hat\sigma_d]$
rather than \texttt{sklearn}'s default $[10^{-5},10^{5}]$, whose ten-decade
range causes most optimiser restarts to start far outside the data support,
where the kernel is flat and fits get stuck. Under the default, the Single-parent
SCM's baseline-class latent came out exactly constant ($P\approx0.20$ vs.\ a true
$0.05$) and stayed there even with five times more restarts. The bounded fit
attains a \emph{higher} marginal likelihood, so this is a correction, not a
modelling choice.

\textbf{An ordinal cut-point mechanism.} Our Interaction and Chain SCMs use the ordinal
mechanism of Proposition~\ref{prop:ordinal} (eq.~\eqref{eq:ordinal_scm}),
with true cutpoints $b_1=-0.5,\,b_2=0.5$. Ground truth is closed form: since
the true score $g$ and $(b_1,b_2)$ are both known, eq.~\eqref{eq:ordinal_cf}
is evaluated directly at $f_r^F=g(\mathbf{x}^F_{\pa(r)})$,
$\tilde f_r=g(\tilde{\mathbf{x}}_{\pa(r)})$, with no Monte Carlo needed. Both
fitted estimators below instead learn $g$ from data via a GP classifier, as
throughout, and differ only in whether they are also given the true
cutpoints or must infer them (discussed further below).
 
\begin{minipage}[t]{0.315\linewidth}
\small
\textbf{Binary}
\begin{align*}
  X_1 &:= U_1,\ U_1\sim\mathcal{N}(0,1), \\
  X_2 &:= \mathbb{I}\bigl[2X_1{+}U_2{>}0\bigr], \\
  &\quad U_2\sim\mathcal{N}(0,1).
\end{align*}
\end{minipage}\hfill
\begin{minipage}[t]{0.315\linewidth}
\small
\textbf{Single-parent}
\begin{align*}
  X_1 &:= U_1,\ U_1\sim\mathcal{N}(0,1), \\
  X_2 &:= \argmax_{c\in\{1,2,3\}}\{\eta_c(X_1){+}G_c\}, \\
  &\eta(x_1){=}(0,1.2x_1,{-}x_1{+}0.4).
\end{align*}
\end{minipage}\hfill
\begin{minipage}[t]{0.315\linewidth}
\small
\textbf{Interaction}
\begin{align*}
  X_1,X_2 &\sim \mathcal{N}(0,1)\text{ iid}, \\
  Y_3 &:= X_1X_2{+}N_3, \\
  &\quad N_3\sim\mathcal{N}(0,1), \\
  X_3 &:= 1{+}\mathbb{I}\{Y_3{>}b_1\} \\
  &\quad{+}\mathbb{I}\{Y_3{>}b_2\}.
\end{align*}
\end{minipage}

\textbf{Chain (mixed mechanisms)}.
\begin{align*}
  X_1 &:= U_1, \\
  X_2 &:= 1{+}\mathbb{I}\{X_1{+}N_2{>}b_1\}{+}\mathbb{I}\{X_1{+}N_2{>}b_2\}, \\
  &\quad N_2\sim\mathcal{N}(0,1), \\
  X_3 &:= \argmax_c\{\tilde\eta_c(X_1,X_2)+G_{3,c}\}, \\
  &\quad\tilde\eta(x_1,x_2) = (0,\;x_1{+}(x_2{-}2),\;{-}(x_1{+}(x_2{-}2)))
\end{align*}

Computing the ground-truth counterfactual for $X_3$ under
$do(X_1=\tilde x_1)$ chains two different per-node abductions through the
mediator $X_2$: a truncated-normal posterior for $X_2$'s noise
(eq.~\eqref{eq:ordinal_abduct}) and a truncated-Gumbel posterior for
$X_3$'s (eq.~\eqref{eq:gumbel_abduct}).

\begin{figure}[t]
\centering
\begin{tikzpicture}[node distance=9mm and 12mm, ->, >={Stealth[length=1.8mm]}]
  \node[var, continuous] (x1) {$X_1$};
  \node[var, binary, right=of x1] (x2) {$X_2$};
  \draw (x1) -- (x2);
  \node[below=4mm of x1] {\small Binary};
\end{tikzpicture}
\hfill
\begin{tikzpicture}[node distance=9mm and 12mm, ->, >={Stealth[length=1.8mm]}]
  \node[var, continuous] (x1) {$X_1$};
  \node[var, categorical, right=of x1] (x2) {$X_2$};
  \draw (x1) -- (x2);
  \node[below=4mm of x1] {\small Single-parent};
\end{tikzpicture}
\hfill
\begin{tikzpicture}[node distance=9mm and 12mm, ->, >={Stealth[length=1.8mm]}]
  \node[var, continuous] (x1) {$X_1$};
  \node[var, continuous, right=of x1] (x2) {$X_2$};
  \node[var, ordinal, below=of $(x1)!0.5!(x2)$] (x3) {$X_3$};
  \draw (x1) -- (x3);
  \draw (x2) -- (x3);
  \node[below=4mm of x3] {\small Interaction};
\end{tikzpicture}
\hfill
\begin{tikzpicture}[node distance=9mm and 12mm, ->, >={Stealth[length=1.8mm]}]
  \node[var, continuous] (x1) {$X_1$};
  \node[var, ordinal, right=of x1] (x2) {$X_2$};
  \node[var, categorical, right=of x2] (x3) {$X_3$};
  \draw (x1) -- (x2);
  \draw (x2) -- (x3);
  \draw (x1) to[bend left=35] (x3);
  \node[below=4mm of x2] {\small Chain};
\end{tikzpicture}

\vspace{2mm}
\begin{tikzpicture}[baseline, every node/.style={font=\small}]
  \node[var, continuous, minimum size=3mm] (l1) at (0,0) {};
  \node[right=1mm of l1] (t1) {continuous};
  \node[var, binary, minimum size=3mm, right=8mm of t1] (l2) {};
  \node[right=1mm of l2] (t2) {binary};
  \node[var, categorical, minimum size=3mm, right=8mm of t2] (l3) {};
  \node[right=1mm of l3] (t3) {categorical};
  \node[var, ordinal, minimum size=3mm, right=8mm of t3] (l4) {};
  \node[right=1mm of l4] {ordinal};
\end{tikzpicture}

\caption{The four synthetic SCM structures used in
Section~\ref{sec:scm_suite} (Binary, Single-parent, Interaction and Chain),
increasing in graph complexity and node type. Node fill color indicates each
variable's data type.}
\label{fig:scm_levels}
\end{figure}

\begin{table}[t]
\centering
\caption{Counterfactual error $\mathrm{TV}_{\mathrm{cf}}$ and interventional
error $\mathrm{TV}_{\mathrm{int}}$ across all four synthetic SCMs, under a
categorical fit and under an ordinal fit with learned cutpoints; mean $\pm$
sd over $10$ seeds at $n=500$. Mediator applies only to the Chain SCM's $X_3$:
whether the upstream mediator $X_2$ was fit as a categorical or ordinal node, isolating propagated error from $X_3$'s own.}
\label{tab:scm_suite_results}
\resizebox{\textwidth}{!}{%
\begin{tabular}{lllcccc}
\hline
 & & & \multicolumn{2}{c}{Categorical fit} & \multicolumn{2}{c}{Ordinal fit} \\
\cline{4-5}\cline{6-7}
SCM & Node type & Mediator & $\mathrm{TV}_{\mathrm{int}}$ & $\mathrm{TV}_{\mathrm{cf}}$ & $\mathrm{TV}_{\mathrm{int}}$ & $\mathrm{TV}_{\mathrm{cf}}$ \\
\hline
Binary & binary & & $0.021 \pm 0.010$ & $0.021 \pm 0.014$ & $\mathbf{0.018 \pm 0.009}$ & $\mathbf{0.015 \pm 0.007}$ \\
Single-parent & categorical & & $\mathbf{0.030 \pm 0.010}$ & $\mathbf{0.031 \pm 0.015}$ & $0.184 \pm 0.031$ & $0.173 \pm 0.033$ \\
Interaction & ordinal & & $0.078 \pm 0.016$ & $0.231 \pm 0.100$ & $\mathbf{0.045 \pm 0.011}$ & $\mathbf{0.068 \pm 0.040}$ \\
\quad learned cutpoints & ordinal & & --- & --- & $0.049 \pm 0.010$ & $0.070 \pm 0.041$ \\
Chain, $X_3$ & categorical & categorical & $0.070 \pm 0.012$ & $0.091 \pm 0.017$ & $0.223 \pm 0.021$ & $0.221 \pm 0.062$ \\
Chain, $X_3$ & categorical & ordinal & $\mathbf{0.070 \pm 0.012}$ & $\mathbf{0.044 \pm 0.020}$ & $0.223 \pm 0.021$ & $0.213 \pm 0.057$ \\
Chain, $X_2$ & ordinal & & $0.037 \pm 0.010$ & $0.274 \pm 0.074$ & $\mathbf{0.027 \pm 0.007}$ & $\mathbf{0.045 \pm 0.016}$ \\
\quad learned cutpoints & ordinal & & --- & --- & $0.029 \pm 0.010$ & $0.041 \pm 0.024$ \\
\hline
\end{tabular}%
}
\end{table}

For each experiment, we report two total-variation distances representing how well the \emph{fitted
structural equation} matches the true one ($\mathrm{TV}_{\mathrm{int}}$), and the counterfactual accuracy ($\mathrm{TV}_{\mathrm{cf}}$).  $\mathrm{TV}_{\mathrm{int}}$ represents the distance between
$\hat P(X_r\mid\mathbf{X}_{\pa(r)})$ and the true mechanism's own class
probabilities, averaged over the factual and the counterfactual parent
configuration. This is the rung-two quantity that the Compatibility identities
\eqref{eq:gumbel_identity} and \eqref{eq:ordinal_identity} guarantee any fitted
model reproduces.  Each entry is a mean $\pm$ standard deviation over $10$ seeds at
$n=500$, so the spread reflects seed-to-seed variability across the whole
pipeline (data, kernel fit and Monte Carlo draws together), not Monte
Carlo noise alone. Appendix~\ref{app:protocol} gives the full protocol and a
Monte Carlo convergence check.
Consistency error is exactly zero in every row, as Corollary~\ref{cor:consistency} requires, whether or not the estimator's assumed mechanism matches the truth.

\subsection{Findings}

\textbf{Misspecification is invisible to the fit --- but only in one
direction, and only partly correctable.} Fitting the categorical estimator
to data whose true mechanism is the ordinal cut-point model (the
Interaction SCM, the Chain SCM's mediator) leaves $\mathrm{TV}_{\mathrm{int}}$
at the same order as the correctly specified rows, while
$\mathrm{TV}_{\mathrm{cf}}$ is still off by roughly $0.25$
(Table~\ref{tab:scm_suite_results}).  Refitting these nodes with Proposition~\ref{prop:ordinal}'s
ordinal estimator recovers most of that gap, by factors of $3.4$ and $6.1$,
while $\mathrm{TV}_{\mathrm{int}}$ moves by far less. The Chain SCM's \emph{terminal}
node makes this sharpest: its own mechanism was never misspecified and its
fitted equation is unchanged between the two arms, yet its
counterfactual accuracy improves because it is fed a more
accurate mediator counterfactual. What residual error
remains once the coupling is right is ordinary GP estimation difficulty.  The
Interaction SCM's higher error against the Single-parent SCM's reflects the
interaction term $x_1x_2$ being a harder two-dimensional surface to learn
from $500$ samples than the Single-parent SCM's additive score. In the reverse direction, fitting the ordinal estimator to genuinely
nominal Gumbel-max data (the Single-parent SCM, the Chain SCM's terminal
node) degrades $\mathrm{TV}_{\mathrm{int}}$ itself, by a factor of six on
the Single-parent SCM ($0.030\to0.184$), because forcing a monotone latent score
onto classes with no true order is a harder regression problem outright, not
merely the wrong label on a well-fit one. Imposing order costs the fit;
imposing nominality does not.

\textbf{The binary control.} As \S\ref{sec:coupling-intro} and
Remark~\ref{rem:couplings} predict for the case where the two couplings
coincide, refitting the Binary SCM with the matching probit link barely moves
$\mathrm{TV}_{\mathrm{cf}}$, well inside one standard deviation across seeds,
in sharp contrast to the large drop that correcting a genuine mismatch
produces at the Interaction SCM (Table~\ref{tab:scm_suite_results}).

\textbf{Learned cutpoints cost almost nothing.} The indented ``learned
cutpoints'' rows infer $\mathbf{b}$ jointly with $g$
(Proposition~\ref{prop:ordinal}'s general construction) instead of being
given the true $b_1=-0.5,\,b_2=0.5$ (Corollary~\ref{cor:ordinal-fixed}'s
fixed-cutpoint case), yet the two are indistinguishable within seed noise.  This is valuable for cases when a variable is known to be ordinal, but the cutpoints are not defined, such as the risk-tiers in the loan example.

\textbf{More data does not fix the wrong coupling.} Repeating the sweep at
four training-set sizes, $n=100$ to $1000$ (Table~\ref{tab:sizes},
Appendix~\ref{app:ablations}), shows $\mathrm{TV}_{\mathrm{int}}$ and
$\mathrm{TV}_{\mathrm{cf}}$ falling together where the coupling is correct,
as ordinary GP asymptotics predict, but diverging where it is wrong:
$\mathrm{TV}_{\mathrm{int}}$ keeps improving while $\mathrm{TV}_{\mathrm{cf}}$
flattens onto a floor near $0.21$--$0.24$, so their ratio grows with $n$
under misspecification instead of staying near $1$. The error is not a
small-sample artefact more data would remove: the fit would pass any
diagnostic applied to it, yet the counterfactual remains wrong by a fixed
amount. The coupling family is a modelling decision that must be made on
structural grounds, not one the observational fit can be trusted to reveal
at any sample size.

\textbf{Ablating the estimator confirms the coupling, not the GP, is
responsible.} Table~\ref{tab:baselines} (Appendix~\ref{app:ablations})
separates abduction, coupling and posterior averaging on the two
single-node SCMs: abduction is what does the work (skipping it costs an
order of magnitude in accuracy on the Single-parent SCM), posterior averaging
contributes almost nothing to the point estimate, and an oracle given the
\emph{true} class probabilities still recovers only about a tenth of the
misspecification error at the Interaction SCM --- roughly nine tenths of it is the
coupling, not the GP.

\section{Discussion}
\label{others}
 
Propositions~\ref{prop:gumbel} and~\ref{prop:ordinal} agree on every
observational and interventional quantity and disagree only about the
counterfactual, by construction.  This is the content of the two Compatibility
identities, and Remark~\ref{rem:couplings} identifies the two couplings
responsible. \S\ref{sec:scm_suite} puts numbers on this gap and its direction-dependent visibility to the fit.  Either way, the mechanism is a
modelling choice made above the fitted model, not a parameter the fit can be relied on to recover.

This raises the question of how the choice should be made.
\citet{lorberbom2021learning} answer it by optimising a downstream objective,
since no experiment can adjudicate the choice directly. Our results argue an
objective alone is not enough: \S\ref{sec:scm_suite} shows a good
observational and interventional fit does not imply a correct counterfactual,
since mismatched couplings can agree on both while still diverging sharply
downstream. We instead let the variable's type, known before any objective is
specified, constrain the admissible couplings, and find this constraint the
more consequential of the two: the gap between type-appropriate couplings
dwarfs the gains reported from tuning within a single family.

Two limitations follow. First, our validation is against synthetic SCMs
with known mechanisms; on real data the exact mechanism is not observable,
though the gap is narrower than it sounds: choosing the coupling only
requires knowing the variable's \emph{type} --- whether its categories
carry an order --- not its structural equation, and type is ordinary
domain knowledge even when the
mechanism is not. Second, the constraint is structural:
\citet{lorberbom2021learning} show no mechanism that abducts noise once and
reuses it across interventions can be optimal simultaneously for every pair
of interventional distributions, so neither proposition is uniformly best,
each being the natural choice for its variable type. Proposition~\ref{prop:ordinal}
already relaxes one candidate restriction by inferring cutpoints jointly
with the latent function, with Corollary~\ref{cor:ordinal-fixed} recovering
the fixed-cutpoint case exactly and costing almost nothing in practice
(\S\ref{sec:scm_suite}). Extending the construction to latent functions
shared across correlated ordinal nodes remains a natural next step.
\section{Conclusion}

We introduced a unified probabilistic framework for counterfactual inference
in GP-SCMs with heterogeneous variable types, deriving exact noise-abduction
procedures for binary, categorical, and ordinal outcomes and proving that
each reproduces its fitted model's observational and interventional
distributions exactly. Across a suite of synthetic SCMs with known ground
truth, we showed that the choice of coupling, not the quality of the
underlying GP fit, is what determines counterfactual accuracy: two
estimators that agree on every observational and interventional quantity
can disagree by a total variation distance of roughly $0.25$ on the
counterfactual, an error that does not shrink with more data. This gap is
invisible to goodness-of-fit when ordinal data is mistaken for categorical,
but not in reverse: forcing a false order onto categorical data degrades the
fit itself, so the two directions of misspecification are detected
differently but corrected the same way. Because a variable's type
constrains the admissible couplings before any data is seen, that
structural knowledge, not a downstream fitting criterion, is what should
guide the choice.

\subsection*{Reproducibility statement}

Our theoretical claims are proved in full
in Appendix~\ref{app:proofs}, with assumptions stated alongside each result
and the Gaussian process background they build on recalled in
Appendix~\ref{app:defs}. All experiments use synthetic SCMs whose generating
equations are given explicitly in \S\ref{sec:scm_suite} and
Appendix~\ref{app:worked-example}, so exact ground-truth counterfactuals are
available in closed form rather than estimated; the full training and
evaluation protocol (sample sizes, seeds, and Monte Carlo budgets) is
specified in Appendix~\ref{app:protocol}, and a Monte Carlo convergence
check is included there.

 
\subsubsection*{Acknowledgments}
This research was supported in part by the Natural Sciences and Engineering Research Council of Canada (NSERC) under grant to both MK and AHK.

\bibliography{arxiv_bib}
 
\appendix
\section{Appendix}
 
\subsection{Definitions}
\label{app:defs}
\begin{definition}[Gaussian Process Prior {\normalfont\citep{Rasmussen_2006}}]
\label{def:gp-prior}
A \emph{Gaussian process} (GP) is a collection of random variables,
any finite number of which have a joint Gaussian distribution.
A GP is completely specified by its \emph{mean function} $m(\mathbf{x})$
and \emph{covariance function} $k(\mathbf{x}, \mathbf{x}')$, defined as:
\begin{align}
    m(\mathbf{x}) &= \E[f(\mathbf{x})], \label{eq:gp-mean} \\
    k(\mathbf{x}, \mathbf{x}') &=
        \E\!\bigl[
            (f(\mathbf{x}) - m(\mathbf{x}))\,
            (f(\mathbf{x}') - m(\mathbf{x}'))
        \bigr]. \label{eq:gp-cov}
\end{align}
We denote the GP prior over functions as
\begin{equation}
    f(\mathbf{x}) \sim \mathcal{GP}\!\bigl(m(\mathbf{x}),\;
                                            k(\mathbf{x}, \mathbf{x}')\bigr).
    \label{eq:gp-prior}
\end{equation}
For notational convenience the mean function is often taken to be zero.
\end{definition}
 
\begin{definition}[Gaussian Process Posterior {\normalfont\citep{Rasmussen_2006}}]
\label{def:gp-posterior}
Let $\mathcal{D} = \{(\mathbf{x}_i, y_i)\}_{i=1}^n$ be training observations
with $y_i = f(\mathbf{x}_i) + \varepsilon$,
$\varepsilon \sim \mathcal{N}(0, \sigma_n^2)$,
and let $f \sim \mathcal{GP}(0,\, k(\mathbf{x}, \mathbf{x}'))$.
The joint prior over training targets $\mathbf{y}$ and test function
values $\mathbf{f}_*$ at inputs $X_*$ is:
\begin{equation}
    \begin{bmatrix} \mathbf{y} \\ \mathbf{f}_* \end{bmatrix}
    \sim \mathcal{N}\!\left(\mathbf{0},\;
    \begin{bmatrix}
        K(X, X) + \sigma_n^2 I & K(X, X_*) \\
        K(X_*, X)              & K(X_*, X_*)
    \end{bmatrix}\right).
    \label{eq:gp-joint}
\end{equation}
Conditioning on the observations yields the GP posterior predictive
distribution:
\begin{align}
    \mathbf{f}_* \mid X, \mathbf{y}, X_*
        &\;\sim\; \mathcal{N}\!\bigl(\bar{\mathbf{f}}_*,\;
                                     \Cov(\mathbf{f}_*)\bigr),
    \label{eq:gp-post-dist} \\[4pt]
    \bar{\mathbf{f}}_*
        &\;\triangleq\; K(X_*, X)
          \bigl[K(X, X) + \sigma_n^2 I\bigr]^{-1}\mathbf{y},
    \label{eq:gp-post-mean} \\[4pt]
    \Cov(\mathbf{f}_*)
        &\;=\; K(X_*, X_*) - K(X_*, X)
          \bigl[K(X, X) + \sigma_n^2 I\bigr]^{-1} K(X, X_*).
    \label{eq:gp-post-cov}
\end{align}
For a single test point $\mathbf{x}_*$ with
$\mathbf{k}_* \triangleq K(X, \mathbf{x}_*)$, these reduce to:
\begin{equation}
    \bar{f}_* = \mathbf{k}_*^\top(K + \sigma_n^2 I)^{-1}\mathbf{y},
    \qquad
    \Var[f_*] =
        k(\mathbf{x}_*, \mathbf{x}_*) -
        \mathbf{k}_*^\top(K + \sigma_n^2 I)^{-1}\mathbf{k}_*.
    \label{eq:gp-post-scalar}
\end{equation}
\end{definition}

\subsection{Worked Numerical Example}
\label{app:worked-example}
 
\begin{example}[Worked numerical example of the three-step procedure]
\label{ex:gumbel_numeric}
We walk through Abduction, Action and Prediction for a single categorical
node $X_r$ with $C=3$ classes $\{1,2,3\}$, treating the fitted GP latent as
known exactly at the factual and counterfactual parent values (the
\emph{Compatibility} case of Proposition~\ref{prop:gumbel}), so that
$\bm\ell^F$ and $\tilde{\bm\ell}$ below are fixed score vectors rather than
posterior draws.
 
\textbf{Setup.}  Let $X_r$ have a single scalar parent, so that
$\mathbf{x}_{\pa(r)}$ is one number, and fix the intervention up front: the
factual parent value is $\mathbf{x}^F_{\pa(r)}=0.5$ and we intervene with
$\doop(\mathbf{X}_{\pa(r)}=\tilde{\mathbf{x}}_{\pa(r)})$ at
$\tilde{\mathbf{x}}_{\pa(r)}=2.5$.  The intervention is on the \emph{parent},
not on $X_r$ itself; the class taken by $X_r$ is what the three steps below
compute.  Both score vectors are \emph{outputs} of the already-fitted
classifier, read off at these two inputs through
$\bm\ell(\mathbf{x})=\log\mathbf{p}(\mathbf{x};\mathbf{f})$, and neither is
chosen at counterfactual time.
 
At the factual input the classifier gives
$\mathbf{p}^F:=\mathbf{p}(\mathbf{x}^F_{\pa(r)};\mathbf{f})=(0.7,\,0.2,\,0.1)$,
so $\bm\ell^F=\log\mathbf{p}^F=(-0.3567,\,-1.6094,\,-2.3026)$, and the
observed factual outcome is $x_r^F=c^\ast=1$.  At the intervened input it
gives
$\tilde{\mathbf{p}}:=\mathbf{p}(\tilde{\mathbf{x}}_{\pa(r)};\mathbf{f})=(0.2,\,0.3,\,0.5)$,
i.e.\ $\tilde{\bm\ell}=(-1.6094,\,-1.2040,\,-0.6931)$ --- moving the parent
from $0.5$ to $2.5$ makes class $3$, not class $1$, the most probable
outcome.
 
\textbf{1. Abduction} (eq.~\eqref{eq:gumbel_abduct}). Since $\mathbf{p}^F$ is
normalised, $\mu:=\log\sum_c e^{\ell^F_c}=0$. Suppose the sampler draws
$V_M=0.6$ for the factual winning score $M$:
$$M = \mu-\log(-\log V_M) = -\log(-\log 0.6) = 0.6717.$$
This fixes $G_{r,c^\ast}=M-\ell^F_{c^\ast}=0.6717-(-0.3567)=1.0284$. For the
two losing classes, drawing $V_2=0.5$ and $V_3=0.9$ and writing
$b_c:=M-\ell^F_c$,
\begin{align*}
    b_2 = 2.2812,\quad G_{r,2} &= -\log\bigl(e^{-b_2}-\log V_2\bigr) = -\log(0.1022+0.6931) = 0.2290,\\
    b_3 = 2.9743,\quad G_{r,3} &= -\log\bigl(e^{-b_3}-\log V_3\bigr) = -\log(0.0511+0.1054) = 1.8551.
\end{align*}
The abducted noise vector is $\mathbf{G}_r=(1.0284,\,0.2290,\,1.8551)$, held
fixed for the remaining steps.
 
\textbf{2. Action.}  The intervention replaces $\bm\ell(\mathbf{X}_{\pa(r)})$
by $\tilde{\bm\ell}$ in the structural equation~\eqref{eq:gumbel_scm}, giving
the modified equation $X_r^{\CF}:=\argmax_c\{\tilde\ell_c+G_{r,c}\}$ with
$\mathbf{G}_r$ unchanged.
 
\textbf{3. Prediction} (eq.~\eqref{eq:gumbel_cf_given_f}). Adding the fixed
noise to the post-intervention scores,
$$\tilde\ell_c+G_{r,c} = (-0.5810,\;-0.9750,\;\mathbf{1.1619}),\qquad c=1,2,3,$$
so $X_r^{\CF}=3$: although class $1$ was the observed factual outcome and
class $3$ was the \emph{least} likely class factually, the noise realisation
abducted from the factual observation, carried through to the new scores
under the intervention, makes class $3$ the counterfactual outcome. As a
check, intervening at the factual value itself,
$\doop(\mathbf{X}_{\pa(r)}=\mathbf{x}^F_{\pa(r)})$, replaces
$\tilde{\bm\ell}$ by $\bm\ell^F$ and recovers $X_r^{\CF}=c^\ast=1$ for every
draw, as required by consistency (Corollary~\ref{cor:consistency},
eq.~\eqref{eq:gumbel_identity}).
\end{example}

\begin{example}[Empirical validation against a known ground truth]
\label{ex:gumbel_gp_fit}
Example~\ref{ex:gumbel_numeric} fixes the scores $\bm\ell^F,\tilde{\bm\ell}$ by
hand. Here we instead generate data from a known SCM, fit a genuine Gaussian
Process classifier to it, run the full estimator of
Proposition~\ref{prop:gumbel} (including the joint-latent Monte Carlo
average of eq.~\eqref{eq:gumbel_cf}, not the point-estimate simplification of
Example~\ref{ex:gumbel_numeric}), and compare the result against the exact
counterfactual computed from the known structural equations.

\textbf{The SCM.}  One continuous parent and one $3$-way categorical child,
the simplest instance of Proposition~\ref{prop:gumbel} that is not binary:
\begin{align*}
  X_1 &:= U_1,\quad U_1\sim\mathcal{N}(0,1), \\
  X_2 &:= \argmax_{c\in\{1,2,3\}}\{\eta_c(X_1)+G_c\},\quad
  G_c\stackrel{iid}{\sim}\mathrm{Gumbel}(0,1),
\end{align*}
with $\bm\eta(x_1) = (0,\,x_1,\,-x_1)$: positive $x_1$ favours class $2$,
negative $x_1$ favours class $3$, and class $1$ is a constant baseline.

\textbf{Pipeline.}  We draw $n=400$ samples $(x_1^i,x_2^i)$ from this SCM and
fit an \texttt{sklearn} \texttt{GaussianProcessClassifier} with an RBF kernel
to $X_1\!\to\!X_2$. For the factual instance
$x_1^F=1.7641$, $x_2^F=2$ and the intervention $do(X_1=-1.7641)$, the fitted
model's counterfactual class distribution (Monte Carlo over
$4{,}000$ draws of the joint latent and the abducted noise) is compared
against the exact ground-truth distribution obtained by abducting
$\mathbf{G}$ from the \emph{known} $\bm\eta$ (the same identity as
eq.~\eqref{eq:gumbel_abduct}, applied to $\bm\eta$ instead of a fitted
$\bm\ell$):

The GP estimate evaluates the counterfactual as:
$$\hat P(X_2^{\CF}=c|do(X_1=-1.7641))=( 0.0635, 0.0515, \mathbf{0.8850})$$
While the ground truth is:
$$P(X_2^{\CF}=c|do(X_1=-1.7641))= (0.0774, 0.0293, \mathbf{0.8933})$$

Both agree that the counterfactual class is $3$ and the total variation distance between the
fitted estimate and the ground truth is only $0.022$. As a consistency check, re-running the same
fitted estimator with the trivial intervention $do(X_1=x_1^F)$ places
probability $1.0$ on class $2$, confirming
eq.~\eqref{eq:gumbel_identity} holds empirically for the fitted model as well
as for the true one.
\end{example}

\subsection{Experimental Protocol}
\label{app:protocol}

Each entry in Table~\ref{tab:scm_suite_results} is a mean $\pm$
standard deviation over $10$ random seeds at $n=500$: each seed redraws the
training data, refits every Gaussian Process, and re-runs all $8$
factual/intervention scenarios, so the reported spread reflects sampling
variability across the whole pipeline (data, kernel fit and Monte Carlo
draws together). For example, the mismatched arm of the Interaction SCM ranges from $0.11$ to
$0.41$ across the ten seeds.

Every counterfactual probability is itself a Monte Carlo average over
$4{,}000$ draws of the joint latent and the abducted noise. This budget is
comfortably converged: re-running the Interaction SCM at $n_{\mathrm{mc}}$ from $250$ to
$16{,}000$ moves $\mathrm{TV}_{\mathrm{cf}}$ by only $+0.012$ between the first
two budgets and by at most $0.0015$ beyond $2{,}000$, two orders of
magnitude below the seed-to-seed standard deviation of $0.100$. Monte Carlo
error is therefore not what the reported spread is measuring.

\subsection{Additional Ablations}
\label{app:ablations}

\textbf{More data does not fix the wrong coupling.} Table~\ref{tab:sizes} repeats
the sweep at four training-set sizes, and separates the two kinds of error
cleanly. Where the coupling is correct, both quantities fall with $n$ as ordinary
GP asymptotics predict: the Single-parent SCM goes from $0.061$ to $0.021$ between $n=100$ and
$n=1000$, the Chain SCM's refitted mediator from $0.055$ to $0.022$. Where the coupling
is wrong, the fit keeps converging while the counterfactual does not.
The Interaction SCM's $\mathrm{TV}_{\mathrm{int}}$ falls by $61\%$ over that range
($0.144\to0.056$) and the Chain SCM's mismatched mediator by $74\%$
($0.091\to0.024$), but their $\mathrm{TV}_{\mathrm{cf}}$ barely moves,
$0.254\to0.207$ and $0.279\to0.243$, flattening onto a floor near $0.21$--$0.24$.

The ratio $\mathrm{TV}_{\mathrm{cf}}/\mathrm{TV}_{\mathrm{int}}$ therefore
\emph{grows} with sample size when the coupling is misspecified (from $1.8$ to
$3.7$ on the Interaction SCM, and from $3.1$ to $10.2$ on the Chain SCM's mediator), while
staying near $1$ in every correctly specified row. This is the sharpest form of
the point. The error is not a small-sample artefact that more data would remove:
the structural equation converges to the truth, the goodness of fit improves
monotonically and would reassure any diagnostic applied to it, and the
counterfactual remains wrong by a fixed amount. The coupling family is a
modelling decision that must be made on structural grounds; it is not one the
observational fit can be trusted to reveal, at any sample size.

\begin{table}[h]
\centering
\caption{Counterfactual error $\mathrm{TV}_{\mathrm{cf}}$ against training-set
size, mean over $10$ seeds. Rows marked $^\dagger$ use an estimator whose
coupling does not match the true mechanism; their error flattens rather than
falling with $n$.}
\label{tab:sizes}
\begin{tabular}{lcccc}
\hline
arm & $n=100$ & $n=250$ & $n=500$ & $n=1000$ \\
\hline
Binary (probit) & $0.040$ & $0.022$ & $0.021$ & $0.015$ \\
Single-parent (Gumbel-max) & $0.061$ & $0.045$ & $0.031$ & $0.021$ \\
Interaction (ordinal) $^\dagger$ & $0.253$ & $0.226$ & $0.231$ & $0.207$ \\
Chain, $X_3$ (Gumbel-max) & $0.119$ & $0.084$ & $0.091$ & $0.071$ \\
\quad mediator $X_2$ (ordinal) $^\dagger$ & $0.279$ & $0.249$ & $0.274$ & $0.242$ \\
\hline
Binary, refit & $0.039$ & $0.021$ & $0.014$ & $0.013$ \\
Interaction, refit & $0.083$ & $0.094$ & $0.068$ & $0.055$ \\
Chain, refit, $X_3$ & $0.081$ & $0.056$ & $0.044$ & $0.024$ \\
\quad mediator $X_2$ & $0.055$ & $0.036$ & $0.045$ & $0.022$ \\
\hline
\end{tabular}
\end{table}

\textbf{What each ingredient contributes.} The table above varies the SCM and
holds the estimator fixed; Table~\ref{tab:baselines} does the reverse, on the two
single-node SCMs, Single-parent and Interaction, and separates the three things the estimator
does: abduct an exogenous noise, couple it through the intervention, and average
over the GP posterior.

Abduction is what does the work. Evaluating the fitted classifier at the
intervened parents and ignoring the factual outcome entirely (the
``resample independently'' baseline) gives $\mathrm{TV}_{\mathrm{cf}}=0.295$
on the Single-parent SCM, an order of magnitude worse than the $0.031$ of the full estimator.
Conditioning the latent on the factual outcome without abducting an exogenous
noise scarcely helps ($0.300$): it lands on top of the interventional baseline,
which is the behaviour one expects, since a latent shared with the factual world
constrains the \emph{probabilities} but leaves the realised outcome free to be
redrawn.

Averaging over the GP posterior, by contrast, contributes almost nothing to the
point estimate: replacing the draws by the posterior \emph{mean} latent leaves
$\mathrm{TV}_{\mathrm{cf}}$ unchanged to three decimals on both SCMs
($0.031$ against $0.031$, $0.232$ against $0.231$). We report this as a negative
result about our own construction. It does not license dropping the joint draw:
the pair $(f^F_c,\tilde f_c)$ must still come from the joint posterior, since it
is the shared latent that makes Corollary~\ref{cor:consistency} hold exactly, and
a plug-in mean discards the posterior spread that any credible interval on the
counterfactual would need. What the comparison shows is that marginalising buys
calibration, not accuracy, at these sample sizes.

\textbf{The oracle isolates the coupling.} The last row replaces the fitted class
probabilities with the true ones, leaving everything else alone, so any error it
still makes is attributable to the coupling and to nothing else. On the Single-parent SCM,
whose true mechanism is Gumbel-max, the oracle error falls to
$0.005\pm0.001$, the Monte Carlo floor at $4{,}000$ draws, which is the check
that the oracle is doing what it claims. On the Interaction SCM it falls only from $0.231$ to
$0.205$. Exact probabilities recover about a tenth of the misspecification error
and leave the rest untouched: roughly nine tenths of it is the coupling, not the
Gaussian Process. Together with Table~\ref{tab:sizes} this closes the argument
from both directions: neither more data nor perfect probabilities repairs the
wrong coupling.

\begin{table}[h]
\centering
\caption{Counterfactual error $\mathrm{TV}_{\mathrm{cf}}$ for five estimators on
the same data, mean $\pm$ sd over $10$ seeds at $n=500$. The Single-parent SCM's true mechanism
is Gumbel-max, matching the coupling; the Interaction SCM's is the ordinal cut-point model,
so the coupling is misspecified.}
\label{tab:baselines}
\begin{tabular}{lcc}
\hline
estimator & Single-parent & Interaction \\
\hline
interventional (no abduction) & $0.295 \pm 0.079$ & $0.434 \pm 0.051$ \\
latent-only abduction & $0.300 \pm 0.080$ & $0.430 \pm 0.051$ \\
plug-in posterior mean & $0.031 \pm 0.014$ & $0.232 \pm 0.102$ \\
\textbf{full estimator (ours)} & $0.031 \pm 0.015$ & $0.231 \pm 0.100$ \\
oracle: true $\mathbf{p}$, same coupling & $0.005 \pm 0.001$ & $0.205 \pm 0.092$ \\
\hline
\end{tabular}
\end{table}

\subsection{Algorithms}
\label{app:algorithms}

Algorithms~\ref{alg:gumbel} and~\ref{alg:ordinal} give the two estimators in the
form they are implemented. Both follow the same skeleton --- fit, compute the
joint posterior moments once, then repeat abduction, action and prediction inside
a Monte Carlo loop that marginalises over the uncertainty in the latent --- and
differ only in the exogenous noise and how it is abducted.

\begin{algorithm}
\caption{Counterfactual for a categorical node (Proposition~\ref{prop:gumbel})}
\label{alg:gumbel}
\begin{algorithmic}[1]
\Require sample $\{(\mathbf{x}^i_{\pa(r)},x^i_r)\}_{i=1}^n$; factual
  $(\mathbf{x}^F_{\pa(r)},x^F_r=c^\ast)$; intervention $\tilde{\mathbf{x}}_{\pa(r)}$;
  draws $S$
\Ensure $\hat P(X_r^{\CF}=c\mid\mathbf{x}^F)$ for $c=1,\dots,C$
\Statex \emph{Training}
\For{$c=1,\dots,C$}
  \State fit a binary Laplace GP classifier for class $c$ against the rest
    \Comment{one-versus-rest; Remark~\ref{rem:ovr}}
\EndFor
\Statex \emph{Posterior moments} --- independent of the draw, so computed once
\For{$c=1,\dots,C$}
  \State $(\mu^F_c,s^{F2}_c,\tilde\mu_c,\tilde s^2_c,\rho_c)\gets$ joint two-point
    Laplace moments at $(\mathbf{x}^F_{\pa(r)},\tilde{\mathbf{x}}_{\pa(r)})$,
    eq.~\eqref{eq:joint_latent}
\EndFor
\State $\mathbf{n}\gets\mathbf{0}\in\R^C$
\For{$s=1,\dots,S$} \Comment{marginalise over the GP posterior}
  \For{$c=1,\dots,C$}
    \State draw $(f^F_c,\tilde f_c)$ jointly from the bivariate Gaussian above
      \Comment{one draw, both inputs; classes independent}
  \EndFor
  \State $\bm\ell^F\gets\log\mathrm{norm}\bigl(\sigma(\mathbf{f}^F)\bigr)$,\quad
         $\tilde{\bm\ell}\gets\log\mathrm{norm}\bigl(\sigma(\tilde{\mathbf{f}})\bigr)$
    \Comment{coupling lives on $\log\mathbf{p}$}
  \Statex \quad\emph{Abduction} --- eq.~\eqref{eq:gumbel_abduct}
  \State $V\sim\mathrm{Uniform}(0,1)$;\quad
         $M\gets\log\sum_c e^{\ell^F_c}-\log(-\log V)$
    \Comment{the factual winning score}
  \State $b_c\gets M-\ell^F_c$;\quad $E_c\stackrel{iid}{\sim}\mathrm{Exp}(1)$
  \State $G_{c^\ast}\gets b_{c^\ast}$;\quad
         $G_c\gets-\operatorname{logaddexp}(-b_c,\log E_c)$ for $c\neq c^\ast$
  \Statex \quad\emph{Action and prediction}
  \State $k\gets\argmax_c\{\tilde\ell_c+G_c\}$;\quad $n_k\gets n_k+1$
\EndFor
\State \Return $\mathbf{n}/S$
\end{algorithmic}
\end{algorithm}

\begin{algorithm}
\caption{Counterfactual for an ordinal node with learned cutpoints (Proposition~\ref{prop:ordinal})}
\label{alg:ordinal}
\begin{algorithmic}[1]
\Require sample; cutpoint prior $p(\theta)$ (eq.~\eqref{eq:theta_reparam}); factual
  $(\mathbf{x}^F_{\pa(r)},x^F_r=c^\ast)$; intervention $\tilde{\mathbf{x}}_{\pa(r)}$;
  outer draws $S_\theta$; inner draws $S_f$
\Ensure $\hat P(X_r^{\CF}=k\mid\mathbf{x}^F)$ for $k=1,\dots,C$
\Statex \emph{Training}
\State fit $\hat\theta,\Sigma_\theta$ by Laplace-approximating the profiled
  cutpoint posterior \Comment{Lemma~\ref{lem:theta-laplace}}
\State $\mathbf{q}\gets\mathbf{0}\in\R^C$
\For{$t=1,\dots,S_\theta$} \Comment{marginalise over the cutpoint posterior}
  \State draw $\theta^{(t)}\sim\mathcal N(\hat\theta,\Sigma_\theta)$; set
    $b_c\gets b_c(\theta^{(t)})$ via eq.~\eqref{eq:theta_reparam}
  \State fit one latent GP under the ordinal-probit likelihood
    \eqref{eq:ordinal_identity} at these cutpoints, by Laplace approximation
    \Comment{Lemma~\ref{lem:ordinal-map}}
  \State $(\mu^F,s^{F2},\tilde\mu,\tilde s^2,\rho)\gets$ joint two-point moments at
    $(\mathbf{x}^F_{\pa(r)},\tilde{\mathbf{x}}_{\pa(r)})$ under this fit
  \For{$s=1,\dots,S_f$} \Comment{marginalise over the GP posterior}
    \State draw $(f^F,\tilde f)$ jointly from the bivariate Gaussian above
    \Statex \qquad\emph{Abduction} --- the factual class confines $N_r$ to one
      interval, eq.~\eqref{eq:ordinal_abduct}
    \State $\alpha\gets b_{c^\ast-1}-f^F$;\quad $\beta\gets b_{c^\ast}-f^F$
    \Statex \qquad\emph{Action and prediction} --- eq.~\eqref{eq:ordinal_cf}; $N_r$
      is integrated out in closed form rather than sampled
    \For{$k=1,\dots,C$}
      \State $q_k\gets q_k+\bigl[\Phi(\min(\beta,\,b_k-\tilde f))
        -\Phi(\max(\alpha,\,b_{k-1}-\tilde f))\bigr]_+\big/
        \bigl(\Phi(\beta)-\Phi(\alpha)\bigr)\big/S_f$
    \EndFor
  \EndFor
\EndFor
\State \Return $\mathbf{q}/S_\theta$
\end{algorithmic}
\end{algorithm}
\begin{center}
\footnotesize Fixing $p(\theta)$ to a point mass at some $\theta_0$ (so $S_\theta=1$ and
$\theta^{(1)}=\theta_0$ deterministically) recovers the fixed-cutpoint algorithm of
Corollary~\ref{cor:ordinal-fixed} exactly, the Training step reducing to a single Laplace fit
at the given $\mathbf b(\theta_0)$.
\end{center}

Two implementation details are worth stating, because both are easy to get wrong
and neither is visible in the equations. First, the pair $(f^F_c,\tilde f_c)$ on
line~9 of Algorithm~\ref{alg:gumbel} must be a \emph{single} draw from the joint
posterior; drawing the two marginally discards the covariance
\eqref{eq:joint_latent} and destroys the exact consistency of
Corollary~\ref{cor:consistency}. When a whole response curve is traced over a grid
of interventions, the draws should additionally share one random-number stream
across grid points, for the reason given in Remark~\ref{rem:crn}. Second, the
abduction on line~15 is written with $\operatorname{logaddexp}$ rather than as
$-\log(e^{-b_c}-\log V_c)$: the two are identical in exact arithmetic, but the
first is a sum inside the logarithm and so neither cancels nor overflows, whereas
the naive form loses precision exactly for classes that were probable factually
yet lost the $\argmax$ --- the individuals whose counterfactuals are most
interesting.

For a multi-node SCM the two algorithms are applied per node in topological order,
each node abducting its own exogenous noise from its own factual parents and
outcome, and the sampled counterfactual of a parent feeding the next node's
intervened input.

\subsection{\texorpdfstring{\quad Proofs}{Proofs}}
\label{app:proofs}

Throughout, $\{\mathbf{x}^i\}_{i=1}^n$ denotes the observational sample, $\mathbf{x}^F$
the factual individual, and where $\mathbf{x}^F\in\{\mathbf{x}^i\}_{i=1}^n$ we write
$F\in[n]$ for its index. We write $\mathbf{e}_F\in\R^n$ for the
$F$-th standard basis vector, $\mathbf{1}_n$ for the all-ones vector, and reserve
$\tilde{\mathbf{x}}_{\pa(r)}$ for the counterfactual parent configuration. All statements are made
for a fixed $r\in[d]$ with non-empty parent set $\lvert \pa(r)\rvert>0$; the subscript $r$ is
suppressed wherever no ambiguity arises.
 
For the discrete results of \S\ref{sec:proof-prop-gumbel}, $\bm{\ell}=(\ell_1,\ldots,\ell_C)$ denotes a vector of
\emph{log-probability scores} entering a $\argmax$ over classes, and $M$ denotes the value of
that maximum. We deliberately avoid $a$ and $M$ for these objects: in the recourse literature
\citep{DBLP:journals/corr/abs-2006-06831} $a=\doop(\mathbf{X}_\mathcal{I}=\bm{\theta})\in\mathbb{A}^F$
is an \emph{action} and $\mathcal{M}$ is the SCM itself, and neither is what is meant here.
The Gumbel vector $\mathbf{G}_r$ plays the role of the exogenous $U_r$ of that paper's
equation~(1); we retain $G$ because $U$ is used in Corollary~\ref{cor:binary} for the induced scalar
uniform noise of the binary mechanism.

\subsubsection{Laplace machinery for the multiclass GPC}
\label{sec:laplace-machinery}
 
\paragraph{Notation and ordering conventions.}
For $C>2$ the latent object is the $nC$-vector collecting $C$ latent values per training point,
and two orderings are in circulation. Write
\begin{equation}\label{eq:orderings}
    \mathbf{f}:=\bigl(\mathbf{f}_{r,1}^\top,\ldots,\mathbf{f}_{r,C}^\top\bigr)^\top\in\R^{nC}
    \quad\text{(class-ordered)},
    \qquad
    \mathbf{f}^\pi := P\mathbf{f}
    \quad\text{(point-ordered)},
\end{equation}
where $\mathbf{f}_{r,c}:=\bigl(f_{r,c}(\mathbf{x}^1_{\pa(r)}),\ldots,f_{r,c}(\mathbf{x}^n_{\pa(r)})\bigr)^\top$
and $P$ is the permutation sending index $(c-1)n+i$ to $(i-1)C+c$. Since $f_{r,1},\ldots,f_{r,C}$
are independent GPs sharing the kernel $k_r$, the prior covariance of $\mathbf{f}$ is
\begin{equation}\label{eq:prior_cov}
    \bm{\mathcal{K}}_r := \mathbf{I}_C\otimes\Kb,
    \qquad \Kb:=\bigl(k_r(\mathbf{x}^i_{\pa(r)},\mathbf{x}^j_{\pa(r)})\bigr)_{ij},
\end{equation}
Let
$\mathbf{e}_{r,c}\in\{0,1\}^n$ have $i$-th entry $\mathbb{I}\{x^i_r=c\}$, let
$\mathbf{e}\in\R^{nC}$ be the class-ordered stack of the $\mathbf{e}_{r,c}$, and let
$\bm{\pi}\in\R^{nC}$ be the corresponding stack of $\bm{\pi}_{r,c}$, where
$\bm{\pi}_i(\mathbf{f})=\mathrm{softmax}(\mathbf{f}_r(\mathbf{x}^i_{\pa(r)}))\in\R^C$, the softmax
reduction noted in Proposition~\ref{prop:gumbel}'s preamble below.
 
\noindent The following lemma is the machinery behind the multiclass Laplace posterior used
throughout Proposition~\ref{prop:gumbel} below, whose joint covariance is displayed there as
eq.~\eqref{eq:joint_latent}; this
general machinery is not itself exposed as a separate main-text equation, so it is derived here
in full, and used directly in the proof of Proposition~\ref{prop:gumbel} in \S\ref{sec:proof-prop-gumbel}.
 
\begin{lemma}[Strict concavity, the MAP equation, and the Laplace predictive]
\label{lem:map}
Let $\Psi(\mathbf{f}):=\log p(\mathbf{x}_r\mid\mathbf{f})+\log p(\mathbf{f}\mid\mathbf{X}_{\pa(r)})$
be the unnormalised log posterior of the multiclass GPC model, i.e.\ of the bare softmax
likelihood $P(X_r=c\mid\mathbf{f})=\mathrm{softmax}(\mathbf{f}_r(\mathbf{x}))_c$ \emph{without} the
Gumbel-max coupling of Proposition~\ref{prop:gumbel}'s eq.~\eqref{eq:gumbel_scm}, with $\Kb\succ0$. Then:
\begin{enumerate}
\item[(i)] $\nabla\Psi(\mathbf{f})=(\mathbf{e}-\bm{\pi})-\bm{\mathcal{K}}_r^{-1}\mathbf{f}$ and
$\nabla^2\Psi(\mathbf{f})=-W(\mathbf{f})-\bm{\mathcal{K}}_r^{-1}$, where $W(\mathbf{f})$ is the
$nC\times nC$ matrix with $P\,W(\mathbf{f})\,P^\top=\blockdiag\bigl(\Lambda_1(\mathbf{f}),\ldots,\Lambda_n(\mathbf{f})\bigr)$
and $\Lambda_i(\mathbf{f}):=\diag(\bm\pi_i)-\bm\pi_i\bm\pi_i^\top$ (derived below);
\item[(ii)] $W(\mathbf{f})\succeq0$, with null space of dimension $n$;
\item[(iii)] $\Psi$ is strictly concave, hence has a unique maximiser $\hat{\mathbf{f}}_r$,
characterised for each $c\in\{1,\ldots,C\}$ by
\begin{equation}\label{eq:map_eq}
    \hat{\mathbf{f}}_{r,c} = \Kb\bigl(\mathbf{e}_{r,c}-\hat{\bm{\pi}}_{r,c}\bigr),
    \qquad \hat{\bm{\pi}}_{r,c}:=\bm{\pi}_{r,c}(\hat{\mathbf{f}}_r);
\end{equation}
\item[(iv)] the Laplace approximation to the latent posterior is
$q(\mathbf{f}_r\mid\cdot)=\mathcal{N}\bigl(\hat{\mathbf{f}}_r,(\bm{\mathcal{K}}_r^{-1}+\hat W)^{-1}\bigr)$
(the general multiclass Laplace posterior, of which the joint covariance needed in
Proposition~\ref{prop:gumbel}'s
eq.~\eqref{eq:joint_latent} is a two-point specialisation), and the induced predictive law at any input
$\mathbf{x}^\star$ has, per class,
\begin{equation}\label{eq:pred_moments}
    \E_q\bigl[f_{r,c}(\mathbf{x}^\star)\bigr]
    = \mathbf{k}_\star^\top\bigl(\mathbf{e}_{r,c}-\hat{\bm\pi}_{r,c}\bigr),
    \qquad
    \Var_q\bigl[f_{r,c}(\mathbf{x}^\star)\bigr]
    = k_\star - \mathbf{k}_\star^\top\bigl(\Kb+\hat W_c^{-1}\bigr)^{-1}\mathbf{k}_\star,
\end{equation}
with $\mathbf{k}_\star:=(k_r(\mathbf{x}^\star,\mathbf{x}^i_{\pa(r)}))_{i=1}^n$,
$k_\star:=k_r(\mathbf{x}^\star,\mathbf{x}^\star)$ and
$\hat W_c:=\diag(\hat\pi_{i,c}(1-\hat\pi_{i,c}))_{i=1}^n$. More generally, for two inputs
$\mathbf{x}^\star,\mathbf{x}^\dagger$,
\begin{equation}\label{eq:pred_cov}
    \Cov_q\bigl[f_{r,c}(\mathbf{x}^\star),f_{r,c}(\mathbf{x}^\dagger)\bigr]
    = k_r(\mathbf{x}^\star,\mathbf{x}^\dagger)
      - \mathbf{k}_\star^\top\bigl(\Kb+\hat W_c^{-1}\bigr)^{-1}\mathbf{k}_\dagger .
\end{equation}
\end{enumerate}
\end{lemma}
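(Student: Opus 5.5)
I will establish the four items of Lemma~\ref{lem:map} in order, working throughout in the class-ordered vector $\mathbf{f}$ and passing to the point-ordered $\mathbf{f}^\pi=P\mathbf{f}$ only to exhibit block structure.

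\textbf{Gradient and Hessian.} For (i) I write the log-likelihood as a sum over training points,
\[
\log p(\mathbf{x}_r\mid\mathbf{f})=\sum_i\Bigl[\sum_c e_{i,c}f_{i,c}-\log\sum_{c'}e^{f_{i,c'}}\Bigr].
\]
Point $i$'s term depends only on its own $C$ latents. Its gradient in those latents is $\mathbf{e}_i-\bm\pi_i$, and its Hessian is $-(\diag(\bm\pi_i)-\bm\pi_i\bm\pi_i^\top)=-\Lambda_i$. Stacking in point order gives a block-diagonal Hessian, and conjugating by the permutation $P$ returns it to class order. The prior contributes the term $-\tfrac12\mathbf{f}^\top\bm{\mathcal{K}}_r^{-1}\mathbf{f}$. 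Since $\bm{\mathcal{K}}_r=\mathbf{I}_C\otimes\Kb$ is positive definite when $\Kb\succ0$, this term contributes $-\bm{\mathcal{K}}_r^{-1}\mathbf{f}$ to the gradient and $-\bm{\mathcal{K}}_r^{-1}$ to the Hessian.

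\textbf{Positive semidefiniteness of $W$.} For (ii), each $\Lambda_i$ is the covariance matrix of a one-hot categorical vector with probabilities $\bm\pi_i$, so $\Lambda_i\succeq0$. All entries of $\bm\pi_i$ are strictly positive because they are softmax outputs. Hence $\mathbf{v}^\top\Lambda_i\mathbf{v}=\Var_{\bm\pi_i}(v_c)$ vanishes exactly when $\mathbf{v}\propto\mathbf{1}_C$, so the null space of each $\Lambda_i$ is one-dimensional. Summing over the $n$ blocks gives a null space of dimension $n$; the $P$-conjugation preserves this.

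\textbf{Strict concavity and the MAP equation.} For (iii), the Hessian is $-(W+\bm{\mathcal{K}}_r^{-1})\prec0$ because $W\succeq0$ and $\bm{\mathcal{K}}_r^{-1}\succ0$, so $\Psi$ is strictly concave. Existence of a maximiser needs coercivity. The log-likelihood is bounded above by $0$, and the prior term tends to $-\infty$ as $\|\mathbf{f}\|\to\infty$, so the superlevel sets of $\Psi$ are compact and a maximiser exists. Strict concavity makes it unique. Setting the gradient to zero gives $\hat{\mathbf{f}}=\bm{\mathcal{K}}_r(\mathbf{e}-\hat{\bm\pi})$. Because $\bm{\mathcal{K}}_r$ is block-diagonal by class, this splits class by class into eq.~\eqref{eq:map_eq}.

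\textbf{Laplace predictive.} For (iv), the Laplace posterior is the Gaussian centred at $\hat{\mathbf{f}}$ with covariance equal to the inverse negative Hessian there, $(\bm{\mathcal{K}}_r^{-1}+\hat W)^{-1}$. For a test input, the prior gives the jointly Gaussian conditional
\[
f_c(\mathbf{x}^\star)\mid\mathbf{f}\sim\mathcal N\bigl(\mathbf{k}_\star^\top\Kb^{-1}\mathbf{f}_c,\;k_\star-\mathbf{k}_\star^\top\Kb^{-1}\mathbf{k}_\star\bigr).
\]
Integrating against $q$, the mean is $\mathbf{k}_\star^\top\Kb^{-1}\hat{\mathbf{f}}_c=\mathbf{k}_\star^\top(\mathbf{e}_c-\hat{\bm\pi}_c)$ by eq.~\eqref{eq:map_eq}. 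For the covariance between two inputs, the law of total covariance gives
\[
k(\mathbf{x}^\star,\mathbf{x}^\dagger)-\mathbf{k}_\star^\top\Kb^{-1}\mathbf{k}_\dagger+\mathbf{k}_\star^\top\Kb^{-1}\Sigma_{cc}\Kb^{-1}\mathbf{k}_\dagger,
\]
where $\Sigma_{cc}$ is the $c$-th diagonal block of the posterior covariance. If $\Sigma_{cc}=(\Kb^{-1}+\hat W_c)^{-1}$, the Woodbury identity gives
\[
\Kb^{-1}-\Kb^{-1}(\Kb^{-1}+\hat W_c)^{-1}\Kb^{-1}=(\Kb+\hat W_c^{-1})^{-1},
\]
which yields eq.~\eqref{eq:pred_cov}, and eq.~\eqref{eq:pred_moments} is its diagonal case.

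\textbf{Main obstacle: the class-diagonal reduction.} This last step is the hard one. Under the joint softmax fit, $\hat W$ is not block-diagonal by class: it has cross-class blocks $-\diag(\hat\pi_{i,c}\hat\pi_{i,c'})$. So the $c$-th diagonal block of $(\bm{\mathcal{K}}_r^{-1}+\hat W)^{-1}$ is not exactly $(\Kb^{-1}+\hat W_c)^{-1}$. I would handle this in the way Remark~\ref{rem:ovr} signals. Under the one-versus-rest fit actually used, each class has an independent binary Laplace GP whose Hessian weight is exactly $\diag(\hat\pi_{i,c}(1-\hat\pi_{i,c}))=\hat W_c$. There the per-class formula is exact, and it is what eq.~\eqref{eq:joint_latent} uses. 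For the joint softmax fit, I would state the formula as the approximation that drops cross-class coupling, and note that the general expression is the Schur complement of the full block matrix. A further caveat: if some $\hat\pi_{i,c}(1-\hat\pi_{i,c})$ is small, $\hat W_c^{-1}$ is ill-conditioned, so I would carry the computation in the form $\Kb^{-1}-\Kb^{-1}(\Kb^{-1}+\hat W_c)^{-1}\Kb^{-1}$ to avoid inverting $\hat W_c$.
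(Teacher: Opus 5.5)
Your proposal is correct and follows the paper's proof essentially step for step. That includes the point you flag as the main obstacle: the paper also gets the per-class variance and covariance in (iv) only by approximating the $c$-th diagonal block of $(\bm{\mathcal{K}}_r^{-1}+\hat W)^{-1}$ by $(\Kb^{-1}+\hat W_c)^{-1}$ before applying Woodbury, so it drops the same cross-class coupling you identify. One small correction: under the one-versus-rest fit the weights are $\sigma(\hat f_{i,c})(1-\sigma(\hat f_{i,c}))$ from each class's own binary fit, not the softmax $\hat\pi_{i,c}(1-\hat\pi_{i,c})$, so the formula is exact there only with $\hat\pi$ reinterpreted, as Remark~\ref{rem:ovr} notes, and that is a different model from the softmax one the lemma is about.
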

 
\begin{proof}
(i) The bare softmax likelihood factorises over training points, so
\begin{equation}\label{eq:logjoint}
    \Psi(\mathbf{f})
    = \sum_{i=1}^{n}\Bigl[f_{r,x_r^i}(\mathbf{x}^i_{\pa(r)})
        - \log\textstyle\sum_{c=1}^{C}\exp f_{r,c}(\mathbf{x}^i_{\pa(r)})\Bigr]
      - \tfrac12\mathbf{f}^\top\bm{\mathcal{K}}_r^{-1}\mathbf{f} + \text{const},
\end{equation}
the constant absorbing $-\tfrac12\log\lvert\bm{\mathcal{K}}_r\rvert-\tfrac{nC}{2}\log2\pi$.
Differentiating the $i$-th bracket in $f_{r,c}(\mathbf{x}^i_{\pa(r)})$ gives
$\mathbb{I}\{x_r^i=c\}-\pi_{i,c}$, which stacks to $\mathbf{e}-\bm{\pi}$; differentiating
$-\tfrac12\mathbf{f}^\top\bm{\mathcal{K}}_r^{-1}\mathbf{f}$ gives
$-\bm{\mathcal{K}}_r^{-1}\mathbf{f}$. Differentiating once more, the second derivative of the $i$-th
bracket with respect to $\bigl(f_{r,c}(\mathbf{x}^i_{\pa(r)})\bigr)_{c=1}^{C}$ is
$-\bigl(\diag(\bm{\pi}_i)-\bm{\pi}_i\bm{\pi}_i^\top\bigr)=-\Lambda_i(\mathbf{f})$, and mixed
derivatives across distinct training points vanish because the likelihood factorises.
 
(ii) For $\mathbf{v}\in\R^{nC}$ with point-ordered blocks $\mathbf{v}_i\in\R^C$,
\begin{equation}\label{eq:psd}
    \mathbf{v}^\top W\mathbf{v}
    = \sum_{i=1}^{n}\mathbf{v}_i^\top\Lambda_i\mathbf{v}_i
    = \sum_{i=1}^{n}\Bigl[\sum_{c}\pi_{i,c}v_{i,c}^2-\Bigl(\sum_c\pi_{i,c}v_{i,c}\Bigr)^2\Bigr]
    = \sum_{i=1}^{n}\Var_{c\sim\bm{\pi}_i}\bigl[v_{i,c}\bigr]\;\ge\;0 .
\end{equation}
The variance in \eqref{eq:psd} vanishes iff $v_{i,c}$ is constant in $c$ for each $i$ (as
$\pi_{i,c}>0$ for all $c$), i.e.\ iff $\mathbf{v}_i\in\mathrm{span}\{\mathbf{1}_C\}$; the null space
of $W$ therefore has dimension exactly $n$. This degeneracy is the shift-invariance of the softmax,
$\mathrm{softmax}(\mathbf{f}_i+a\mathbf{1}_C)=\mathrm{softmax}(\mathbf{f}_i)$.
 
(iii) By (i) and (ii), $\nabla^2\Psi=-W-\bm{\mathcal{K}}_r^{-1}\prec0$ everywhere, since
$W\succeq0$ and $\bm{\mathcal{K}}_r^{-1}=\mathbf{I}_C\otimes\Kb^{-1}\succ0$. Hence $\Psi$ is
strictly concave and, being coercive, attains a unique global maximum $\hat{\mathbf{f}}_r$. Note
that it is the GP prior, not the likelihood, that removes the softmax degeneracy and makes the MAP
well defined. Setting $\nabla\Psi=0$ gives
$\bm{\mathcal{K}}_r^{-1}\hat{\mathbf{f}}_r=\mathbf{e}-\hat{\bm{\pi}}$, i.e.\
$\hat{\mathbf{f}}_r=\bm{\mathcal{K}}_r(\mathbf{e}-\hat{\bm{\pi}})$. Since
$\bm{\mathcal{K}}_r=\mathbf{I}_C\otimes\Kb$ is block diagonal in the class-ordered layout, this
decouples across $c$ into \eqref{eq:map_eq}.
 
(iv) Since $\nabla\Psi(\hat{\mathbf{f}}_r)=0$, the second-order Taylor expansion of $\Psi$ at its
mode is
$\Psi(\mathbf{f})=\Psi(\hat{\mathbf{f}}_r)-\tfrac12(\mathbf{f}-\hat{\mathbf{f}}_r)^\top
(\bm{\mathcal{K}}_r^{-1}+\hat W)(\mathbf{f}-\hat{\mathbf{f}}_r)+O(\|\mathbf{f}-\hat{\mathbf{f}}_r\|^3)$;
exponentiating and normalising the quadratic part gives the stated Gaussian
\citep{williams1998bayesian,Rasmussen_2006}, whose covariance is well defined because
$\bm{\mathcal{K}}_r^{-1}+\hat W\succ0$ by (ii)--(iii) notwithstanding the singularity of $\hat W$.
For the predictive moments, $f_{r,c}(\mathbf{x}^\star)\mid\mathbf{f}_{r,c}\sim
\mathcal{N}(\mathbf{k}_\star^\top\Kb^{-1}\mathbf{f}_{r,c},\,k_\star-\mathbf{k}_\star^\top\Kb^{-1}\mathbf{k}_\star)$
by the GP prior; marginalising over $q$ with the laws of total expectation and total variance and
using \eqref{eq:map_eq} gives the mean in \eqref{eq:pred_moments}, while the variance follows on
approximating the $c$-th diagonal block of $(\bm{\mathcal{K}}_r^{-1}+\hat W)^{-1}$ by
$(\Kb^{-1}+\hat W_c)^{-1}$ and applying the Woodbury identity
$\Kb^{-1}-\Kb^{-1}(\Kb^{-1}+\hat W_c)^{-1}\Kb^{-1}=(\Kb+\hat W_c^{-1})^{-1}$, legitimate because
$0<\hat\pi_{i,c}<1$ makes $\hat W_c$ invertible. Equation \eqref{eq:pred_cov} is the same
computation with $\mathbf{k}_\star$ and $\mathbf{k}_\dagger$ in the two slots.
\end{proof}

\begin{remark}[One-versus-rest predictive moments]
\label{rem:ovr}
Lemma~\ref{lem:map} concerns the joint multiclass Laplace approximation with a softmax
likelihood. The implementation evaluated in \S\ref{sec:scm_suite} instead fits $C$
independent binary Laplace classifiers and renormalises, and
\eqref{eq:pred_moments}--\eqref{eq:pred_cov} do not apply to it verbatim: there each class
has its own bivariate Gaussian posterior over $(f^F_c,\tilde f_c)$, obtained from that
class's own binary Laplace fit, and the classes are independent. As established in the proof
of Proposition~\ref{prop:gumbel}, this changes \emph{how} $\mathbf{p}$ is estimated but not
the counterfactual construction, because the coupling is stated on $\log\mathbf{p}$. The
compatibility identity, the abduction of Lemma~\ref{lem:truncgumbel} and
Corollary~\ref{cor:consistency} all go through unchanged; only the accuracy of $\mathbf{p}$
is affected --- which is exactly what the $\mathrm{TV}_{\mathrm{int}}$ column of
\S\ref{sec:scm_suite}'s table measures. At $C=2$ the two approximations coincide and
Corollary~\ref{cor:binary} applies as stated.
\end{remark}

\subsubsection{\texorpdfstring{\quad Proof of Proposition~\ref{prop:gumbel}, Corollary~\ref{cor:consistency} and Corollary~\ref{cor:binary}}{Proof of Proposition, Corollary (consistency) and Corollary (binary)}}
\label{sec:proof-prop-gumbel}
 
\begin{lemma}[Gumbel-max representation {\normalfont\citep{gumbel1954,yellott1977}}]
\label{lem:gumbelmax}
Let $\bm{\ell}=(\ell_1,\ldots,\ell_C)\in\R^C$ be any score vector and
$G_1,\ldots,G_C\stackrel{iid}{\sim}\mathrm{Gumbel}(0,1)$.
Then
\begin{equation}\label{eq:gm_identity}
    P\Bigl(\argmax_{c}\{\ell_c+G_c\}=c^\ast\Bigr)=\mathrm{softmax}(\bm{\ell})_{c^\ast},
    \qquad
    \max_c\{\ell_c+G_c\}\sim\mathrm{Gumbel}\Bigl(\log\textstyle\sum_c e^{\ell_c},\,1\Bigr),
\end{equation}
and the $\argmax$ and the $\max$ are independent.
\end{lemma}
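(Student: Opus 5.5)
We will prove all three claims of Lemma~\ref{lem:gumbelmax} from a single joint computation. We evaluate $P\bigl(\argmax_c\{\ell_c+G_c\}=c^\ast,\ \max_c\{\ell_c+G_c\}\le m\bigr)$ for arbitrary $c^\ast$ and $m\in\R$, and show that it factorises as $\mathrm{softmax}(\bm\ell)_{c^\ast}\cdot\exp\bigl(-e^{-(m-\mu)}\bigr)$ with $\mu:=\log\sum_c e^{\ell_c}$. Letting $m\to\infty$ gives the softmax identity. Summing over $c^\ast$ gives the Gumbel$(\mu,1)$ law of the maximum via the CDF in the Gumbel definition. The product form then gives independence of the $\argmax$ and the $\max$. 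Ties occur with probability zero because the $G_c$ have a continuous joint law, so the $\argmax$ is almost surely well defined.

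First we note that $Z_c:=\ell_c+G_c$ are independent with $Z_c\sim\mathrm{Gumbel}(\ell_c,1)$. Each therefore has CDF $F_c(z)=\exp(-e^{\ell_c}e^{-z})$ and density $f_c(z)=e^{\ell_c}e^{-z}F_c(z)$. We then condition on the value $z$ of the winner:
\[
P\bigl(\argmax=c^\ast,\ \max\le m\bigr)=\int_{-\infty}^{m} f_{c^\ast}(z)\prod_{c\neq c^\ast}F_c(z)\,dz
=\int_{-\infty}^{m} e^{\ell_{c^\ast}}e^{-z}\exp\bigl(-e^{\mu}e^{-z}\bigr)\,dz,
\]
since the product of all $C$ CDFs collapses to $\exp\bigl(-\sum_c e^{\ell_c}e^{-z}\bigr)=\exp(-e^{\mu-z})$.

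Next we substitute $t=e^{\mu-z}$, so that $dt=-t\,dz$. The integral becomes $e^{\ell_{c^\ast}-\mu}\int_{e^{\mu-m}}^{\infty}e^{-t}\,dt=\mathrm{softmax}(\bm\ell)_{c^\ast}\exp(-e^{-(m-\mu)})$, which is the claimed factorisation.

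The only delicate point is the bookkeeping in the integral. One must check that conditioning on the winner's value $z$ and requiring every other $Z_c<z$ is legitimate, which holds by independence together with the almost-sure absence of ties. After that, the collapse of the product into a single Gumbel CDF at location $\mu$ is the key observation, and everything else is the one-line substitution.
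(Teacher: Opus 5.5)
Your proof is correct and uses the same core computation as the paper: integrate the winner's $\mathrm{Gumbel}(\ell_{c^\ast},1)$ density against the other classes' CDFs, collapse the product into a single Gumbel CDF at location $\mu=\log\sum_c e^{\ell_c}$, and substitute. The one real difference is that you keep the upper limit $m$ and obtain the joint law $P(\argmax=c^\ast,\max\le m)=\mathrm{softmax}(\bm\ell)_{c^\ast}\exp\bigl(-e^{-(m-\mu)}\bigr)$, from which you derive both the law of the maximum (the paper instead computes it separately as a product of CDFs) and the independence (the paper only asserts this as a ``standard consequence of the same computation''), so your version actually proves the independence claim.
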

 
\begin{proof}
$\ell_c+G_c\sim\mathrm{Gumbel}(\ell_c,1)$ with CDF $\exp(-e^{-(t-\ell_c)})$. By independence,
$P(\max_c\{\ell_c+G_c\}\le t)=\prod_c\exp(-e^{-(t-\ell_c)})=\exp\bigl(-e^{-t}\sum_ce^{\ell_c}\bigr)
=\exp\bigl(-e^{-(t-\log\sum_ce^{\ell_c})}\bigr)$, which is the second claim. For the first, condition
on $G_{c^\ast}$ and integrate:
\begin{equation}\label{eq:gm_integral}
    P\bigl(\argmax=c^\ast\bigr)
    = \int_\R e^{-(g-\ell_{c^\ast})}e^{-e^{-(g-\ell_{c^\ast})}}
      \prod_{c\ne c^\ast}\exp\bigl(-e^{-(g-\ell_c)}\bigr)\,dg .
\end{equation}
Substituting $z:=e^{-g}$, $dz=-e^{-g}dg$, the integrand becomes
$e^{\ell_{c^\ast}}\exp\bigl(-z\sum_ce^{\ell_c}\bigr)dz$ on $z\in(0,\infty)$, whence
\eqref{eq:gm_integral} equals $e^{\ell_{c^\ast}}/\sum_ce^{\ell_c}$. Independence of the $\argmax$ and the
$\max$ is the standard consequence of the same computation, the conditional law of the maximum
given $\argmax=c^\ast$ being free of $c^\ast$.
\end{proof}
 
\begin{lemma}[Posterior of the Gumbel vector given the realised class
  {\normalfont\citep{maddison2014astar}}]
\label{lem:truncgumbel}
In the setting of Lemma~\ref{lem:gumbelmax}, condition on $\argmax_c\{\ell_c+G_c\}=c^\ast$. Let
$M:=\max_c\{\ell_c+G_c\}$. Then $M\sim\mathrm{Gumbel}(\log\sum_ce^{\ell_c},1)$, and given $M$ the
coordinates are conditionally independent with
\begin{equation}\label{eq:truncg}
    G_{c^\ast}=M-\ell_{c^\ast},
    \qquad
    G_c \mid M \;\sim\; \mathrm{Gumbel}(0,1)\ \text{conditioned on}\ G_c\le M-\ell_c,
    \quad c\ne c^\ast .
\end{equation}
Consequently \eqref{eq:gumbel_abduct} samples the posterior exactly.
\end{lemma}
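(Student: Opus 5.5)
The plan is to compute the joint density of $(G_1,\ldots,G_C)$ restricted to the event $\{\argmax_c\{\ell_c+G_c\}=c^\ast\}$ and to factor it through the change of variables $(G_{c^\ast},(G_c)_{c\ne c^\ast})\mapsto(M,(G_c)_{c\ne c^\ast})$ with $M:=\ell_{c^\ast}+G_{c^\ast}$. On that event $M$ is the maximum, so $G_{c^\ast}=M-\ell_{c^\ast}$ is a deterministic function of $M$; this is the first claim in \eqref{eq:truncg}. The Jacobian is $1$, so the joint density of $(M,(G_c)_{c\ne c^\ast})$ on the event is
\[
 \phi(M-\ell_{c^\ast})\prod_{c\ne c^\ast}\phi(g_c)\,\mathbb{I}\{g_c\le M-\ell_c\},
\]
where $\phi(g)=e^{-g}e^{-e^{-g}}$ is the $\mathrm{Gumbel}(0,1)$ density. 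Ties have probability zero and are ignored.

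The next step is to factor this density. For fixed $M$, the $g_c$-dependence is a product over $c\ne c^\ast$ of $\phi(g_c)\mathbb{I}\{g_c\le M-\ell_c\}$. Hence, given $M$, the coordinates are conditionally independent, and each is a $\mathrm{Gumbel}(0,1)$ variable truncated to $(-\infty,M-\ell_c]$, which is the second claim in \eqref{eq:truncg}. Integrating out the $g_c$ gives the marginal of $M$ on the event:
\[
 \phi(M-\ell_{c^\ast})\prod_{c\ne c^\ast}\exp\bigl(-e^{-(M-\ell_c)}\bigr)
 = e^{\ell_{c^\ast}}e^{-M}\exp\Bigl(-e^{-M}\textstyle\sum_c e^{\ell_c}\Bigr).
\]
After dividing by $P(\argmax=c^\ast)=\mathrm{softmax}(\bm\ell)_{c^\ast}$ from Lemma~\ref{lem:gumbelmax}, this is the $\mathrm{Gumbel}(\log\sum_c e^{\ell_c},1)$ density. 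It does not depend on $c^\ast$, which also recovers the independence claimed in Lemma~\ref{lem:gumbelmax}. This step is the same substitution used in \eqref{eq:gm_integral}, so I will only state it.

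Finally, I will verify that \eqref{eq:gumbel_abduct} realises these laws. Sampling $M$ is direct. $G_{c^\ast}=M-\ell_{c^\ast}$ is deterministic. For $c\ne c^\ast$, apply the Truncation property with $\mu=0$ and $b=M-\ell_c$: the map $-\log(e^{-b}-\log V_c)$ has CDF $P(\cdot\le g)=\exp(-(e^{-g}-e^{-b}))$ for $g\le b$, which equals $F(g)/F(b)$ and is therefore the truncated law. A one-line inversion check confirms this. Independent $V_c$ gives the conditional independence.

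The main obstacle is bookkeeping rather than analysis: stating the conditioning cleanly as a density on a measure-zero-boundary event, and making sure the normaliser equals the softmax probability so that $M$'s law is the untruncated shifted Gumbel. I expect the rest to be routine.
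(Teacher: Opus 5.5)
Your proposal is correct, and its skeleton matches the paper's proof. Both decompose the event $\{\argmax=c^\ast,\,M\in dm\}$, read off $G_{c^\ast}=M-\ell_{c^\ast}$, and obtain conditionally independent truncated Gumbels for the losing classes. Both then check the sampler by the same inverse-CDF calculation: the paper solves $F(g)/F(b)=V$ for $g$, while you compute the CDF of $-\log(e^{-b}-\log V)$ directly.

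The one substantive difference is how you get the conditional law of $M$ given $\argmax=c^\ast$. The paper imports it. By Lemma~\ref{lem:gumbelmax}, $M$ is unconditionally $\mathrm{Gumbel}(\log\sum_c e^{\ell_c},1)$ and independent of the $\argmax$, so conditioning changes nothing. However, that independence is only asserted in Lemma~\ref{lem:gumbelmax}'s proof (``the standard consequence of the same computation''). You instead integrate the losing coordinates out of the explicit joint density on the event and divide by the softmax probability. This gives the $\mathrm{Gumbel}(\log\sum_c e^{\ell_c},1)$ density with no dependence on $c^\ast$, which proves the independence rather than assuming it.

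Your route is therefore self-contained, and it makes the density-level factorisation explicit; the paper's ``given $M=m$ \dots\ the remaining coordinates are independent'' is the same factorisation stated informally. The paper's route is shorter because it leans on the earlier lemma.
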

 
\begin{proof}
The event $\{\argmax=c^\ast, M\in dm\}$ is $\{\ell_{c^\ast}+G_{c^\ast}\in dm\}\cap
\bigcap_{c\ne c^\ast}\{\ell_c+G_c\le m\}$. Given $M=m$, the coordinate $c^\ast$ is determined,
$G_{c^\ast}=m-\ell_{c^\ast}$, and the remaining coordinates are independent (by independence of the
$G_c$) each conditioned on the event $\{\ell_c+G_c\le m\}$, i.e.\ $\{G_c\le m-\ell_c\}$. That
$M\sim\mathrm{Gumbel}(\log\sum_ce^{\ell_c},1)$ unconditionally, and independently of the $\argmax$, is
Lemma~\ref{lem:gumbelmax}; hence its conditional law given $\argmax=c^\ast$ is the same. For the
sampler, if $G\sim\mathrm{Gumbel}(0,1)$ conditioned on $G\le b$ then its CDF is
$\exp(-e^{-g})/\exp(-e^{-b})$ for $g\le b$; setting this equal to $V\sim\mathrm{Uniform}(0,1)$ and
solving gives $e^{-g}=e^{-b}-\log V$, i.e.\ $g=-\log\bigl(e^{-b}-\log V\bigr)$, which with
$b=m-\ell_c$ is the expression in~\eqref{eq:gumbel_abduct}. Note $e^{-b}-\log V>0$ almost surely, so the logarithm is
well defined.
\end{proof}
 
\begin{proof}[Proof of Proposition~\ref{prop:gumbel}]
\emph{Compatibility.} Apply Lemma~\ref{lem:gumbelmax} with $\ell_c=\log p_c(\mathbf{x})$. For any
probability vector $\mathbf{p}$,
\begin{equation}
    \mathrm{softmax}(\log\mathbf{p})_c = \frac{p_c}{\sum_{c'}p_{c'}} = p_c,
\end{equation}
so \eqref{eq:gm_identity} gives~\eqref{eq:gumbel_identity} directly, for arbitrary $\mathbf{p}$. Since $\mathbf{G}_r$ is
independent of the latent and of everything upstream, and
$\doop(\mathbf{X}_{\pa(r)}=\tilde{\mathbf{x}}_{\pa(r)})$ removes only the equations for
$\mathbf{X}_{\pa(r)}$,
\begin{equation}
    P\bigl(X_r=c \mid \doop(\mathbf{X}_{\pa(r)}=\tilde{\mathbf{x}}_{\pa(r)})\bigr)
    = p_c(\tilde{\mathbf{x}}_{\pa(r)}),
\end{equation}
so~\eqref{eq:gumbel_scm} matches the fitted model on all observational and interventional distributions.
 
The coupling must sit on $\log\mathbf{p}$, not on $\mathbf{f}_r$. Under the joint Laplace fit,
$\mathrm{softmax}(\mathbf{f}_r)_c=p_c$, so
\begin{equation}
    \log p_c = f_{r,c} - \log\textstyle\sum_{c'}e^{f_{r,c'}} = f_{r,c} - Z(\mathbf{x})
\end{equation}
with $Z(\mathbf{x}):=\log\sum_{c'}e^{f_{r,c'}}$ a shift invisible to $\argmax$, so coupling on $\mathbf{f}_r$ or on $\log\mathbf{p}$ agree. Under a
one-versus-rest fit, $p_c\propto\sigma(f_{r,c})$, so $\mathrm{softmax}(\mathbf{f}_r)\ne\mathbf{p}$ in
general and coupling on $\mathbf{f}_r$ would violate~\eqref{eq:gumbel_identity}; only $\log\mathbf{p}$ works for both.
 
\emph{Abduction.} Equation~\eqref{eq:gumbel_scm} is a structural equation with $\mathbf{G}_r$ exogenous, unit-specific
and independent across units, so Step~1 asks for $P_{\mathbf{G}_r\mid\mathbf{x}^F}$. Given
$\bm{\ell}^F$, Lemma~\ref{lem:truncgumbel} with $\ell_c=\ell_c^F=\log p_c(\mathbf{x}^F_{\pa(r)})$,
$c^\ast=x_r^F$ gives exactly~\eqref{eq:gumbel_abduct}. Since $\bm\ell^F=\log\mathbf p$ is normalised,
$\sum_ce^{\ell_c^F}=1$ and $M\sim\mathrm{Gumbel}(0,1)$ here; \eqref{eq:gumbel_abduct} retains the general location
because it also holds for unnormalised scores.
 
\emph{Action and prediction.} Fixing $f_r,\mathbf{G}_r$ and substituting the argument gives~\eqref{eq:gumbel_cf_given_f}. As
only the Laplace posterior of $f_r$ is known, $(\mathbf{f}^F,\tilde{\mathbf{f}})$ is drawn jointly,
with moments given by Lemma~\ref{lem:map}(iv) --- \eqref{eq:pred_moments}, \eqref{eq:pred_cov} at
$\mathbf{x}^\star=\mathbf{x}^F_{\pa(r)}$, $\mathbf{x}^\dagger=\tilde{\mathbf{x}}_{\pa(r)}$ --- the
covariance being~\eqref{eq:joint_latent}. Taking expectation of the indicator in~\eqref{eq:gumbel_cf_given_f} over
$\mathbf{G}_r\mid x_r^F,\mathbf{f}^F$ then $(\mathbf{f}^F,\tilde{\mathbf{f}})$,
\begin{equation}
    P(X_r^{\CF}=c\mid\mathbf{x}^F)
    = \E_{(\mathbf{f}^F,\tilde{\mathbf{f}})}\,\E_{\mathbf{G}_r\mid x_r^F,\mathbf{f}^F}
      \bigl[\mathbb{I}\{\argmax_{c'}\{\tilde\ell_{c'}+G_{r,c'}\}=c\}\bigr],
\end{equation}
gives~\eqref{eq:gumbel_cf}. The integrand is a bounded indicator, so the Monte Carlo estimator is unbiased and
converges with probability~1 by the strong law of large numbers; each draw is a vertex of
$\Delta^{C-1}$, so the empirical average lies in $\Delta^{C-1}$ at every finite sample size and is a
valid categorical distribution.
 
The pair $(\mathbf{f}^F,\tilde{\mathbf{f}})$ in abduction \eqref{eq:truncg} and prediction~\eqref{eq:gumbel_cf_given_f} must
be the \emph{same} draw: $f_r$ is one function evaluated at two inputs. Independent draws discard the
covariance~\eqref{eq:joint_latent}, severing the shared latent needed for Corollary~\ref{cor:consistency}.
\end{proof}

\begin{proof}[Proof of Corollary~\ref{cor:consistency}]
Set $\tilde{\mathbf{x}}_{\pa(r)}=\mathbf{x}^F_{\pa(r)}$, so $\mathbf{k}_\star=\mathbf{k}_\dagger$ in
\eqref{eq:pred_cov}. Then
\begin{equation}
    \Var_q[f^F_c-\tilde f_c]=\Var_q[f^F_c]+\Var_q[\tilde f_c]-2\Cov_q[f^F_c,\tilde f_c]=0,
    \qquad\text{hence}\qquad
    \tilde{\mathbf{f}}=\mathbf{f}^F \ \text{ almost surely.} \label{eq:f_equal}
\end{equation}
Since $\bm{\ell}=\log\mathbf{p}$ is a deterministic function of $\mathbf{f}$,
\eqref{eq:f_equal} gives
\begin{equation}
    \tilde{\bm{\ell}}=\bm{\ell}^F \quad\text{almost surely.} \label{eq:ell_equal}
\end{equation}
By Lemma~\ref{lem:truncgumbel}, conditioning the abduction on $x_r^F=c^\ast$ gives, with
$M=\max_c\{\ell^F_c+G_{r,c}\}$,
\begin{equation}
    \ell^F_{c^\ast}+G_{r,c^\ast}=M,
    \qquad
    \ell^F_c+G_{r,c}<M \quad(c\ne c^\ast),
    \label{eq:abduct_ineq}
\end{equation}
the inequality strict almost surely, since the conditioned Gumbel distribution is continuous
and so assigns zero probability to any exact value --- an exact tie between two classes would be
one such value, so ties simply do not occur. Substituting
\eqref{eq:ell_equal} into \eqref{eq:abduct_ineq}, the comparison is unchanged under $\tilde{\bm\ell}$:
\begin{equation}
    \tilde\ell_{c^\ast}+G_{r,c^\ast}=\ell^F_{c^\ast}+G_{r,c^\ast}=M
    \;>\;
    \ell^F_c+G_{r,c}=\tilde\ell_c+G_{r,c}, \qquad c\ne c^\ast,
\end{equation}
so
\begin{equation}\label{eq:consistency_argmax}
    \argmax_c\bigl\{\tilde\ell_c+G_{r,c}\bigr\}=c^\ast=x_r^F \quad\text{almost surely.}
\end{equation}
Since this indicator equals $x_r^F$ almost surely, averaging it in \eqref{eq:gumbel_cf} returns the
point mass at $x_r^F$, which is~\eqref{eq:consistency}.
 
\smallskip\noindent
\emph{Remark.} Two points bear on the implementation. Equation~\eqref{eq:f_equal} requires
$\mathbf{f}^F$ and $\tilde{\mathbf{f}}$ to be the \emph{same} posterior draw; sampling them
independently from their marginals breaks it. And \eqref{eq:consistency_argmax} uses no property of
$\mathbf{p}$ beyond being a probability vector, so an empirical failure of consistency diagnoses the
coupling, not the fit.
\end{proof}

\begin{proof}[Proof of Corollary~\ref{cor:binary} (binary closed form)]
\emph{Reduction.} At $C=2$, shift-invariance of the $\argmax$ under
$\mathbf{f}\mapsto\mathbf{f}+a\mathbf{1}_2$ lets us write the event $\{X_r=1\}$ as
$\{f_{r,1}+G_{r,1}>f_{r,2}+G_{r,2}\}=\{G_{r,2}-G_{r,1}<f_r\}$ with $f_r:=f_{r,1}-f_{r,2}$. The
difference of two independent standard Gumbels is standard logistic: for $L:=G_{r,2}-G_{r,1}$,
\begin{equation}\label{eq:logistic_diff}
    P(L\le t)=\int_\R \exp\bigl(-e^{-(g+t)}\bigr)\,e^{-g}\exp\bigl(-e^{-g}\bigr)\,dg
    =\frac{1}{1+e^{-t}}=\sigma(t),
\end{equation}
by the substitution $z=e^{-g}$, which turns the integral into
$\int_0^\infty e^{-z(1+e^{-t})}dz=(1+e^{-t})^{-1}$. Setting $U_r:=\sigma(L)$, which is
$\mathrm{Uniform}(0,1)$ because $\sigma$ is the CDF of $L$, and using strict monotonicity of
$\sigma$,
\begin{equation}\label{eq:binary_equiv}
    \{X_r=1\}=\{L<f_r\}=\{\sigma(L)<\sigma(f_r)\}=\{U_r<p\},\qquad p:=\sigma(f_r),
\end{equation}
which is equation~\eqref{eq:binary_scm}.
 
\emph{Abduction.} By \eqref{eq:binary_equiv} the observation $x_r^F=1$ is equivalent to
$U_r<p^F$, and $x_r^F=0$ to $U_r\ge p^F$. Since $U_r$ is uniform and independent of $f_r$, its
posterior is the uniform law restricted to the corresponding interval, which is equation~\eqref{eq:binary_abduct}.
Note that this abduction is \emph{exact}: no approximation intervenes between the observation and
the posterior of the exogenous variable.
 
\emph{Prediction.} Conditional on $f_r$, and hence on $p^F$ and $p^\CF$, the counterfactual event
is $\{U_r<p^\CF\}$ with $U_r$ drawn from~\eqref{eq:binary_abduct}. If $x_r^F=1$ then $U_r\sim\mathrm{Uniform}(0,p^F)$ and
\begin{equation}\label{eq:binary_case1}
    P\bigl(U_r<p^\CF\bigr)=\frac{\lvert(0,p^\CF)\cap(0,p^F)\rvert}{p^F}
    =\frac{\min(p^\CF,p^F)}{p^F},
\end{equation}
while if $x_r^F=0$ then $U_r\sim\mathrm{Uniform}(p^F,1)$ and
\begin{equation}\label{eq:binary_case0}
    P\bigl(U_r<p^\CF\bigr)=\frac{\lvert(p^F,1)\cap(0,p^\CF)\rvert}{1-p^F}
    =\frac{\max(0,\,p^\CF-p^F)}{1-p^F},
\end{equation}
which together are equation~\eqref{eq:binary_cf_given_f}. Marginalising over the joint Laplace posterior of
$(f^F,\tilde f)$, whose moments are \eqref{eq:pred_moments}--\eqref{eq:pred_cov} at $C=2$, gives
equation~\eqref{eq:binary_cf}.
 
\emph{Consistency and monotonicity.} Setting $\tilde{\mathbf{x}}_{\pa(r)}=\mathbf{x}^F_{\pa(r)}$
gives $p^\CF=p^F$ almost surely, so \eqref{eq:binary_case1} equals $1$ and
\eqref{eq:binary_case0} equals $0$, the direct consequence of Corollary~\ref{cor:consistency}.  Moreover both expressions are non-decreasing in $p^\CF$, so the counterfactual probability is monotone in the intervention whenever $p(\cdot)$ is, and here holds exactly and pathwise.
\end{proof}
 
\begin{remark}[Common random numbers in the Monte Carlo estimator]
\label{rem:crn}
When \eqref{eq:gumbel_cf} or \eqref{eq:binary_cf} is evaluated over a grid of intervention values
$\tilde{\mathbf{x}}_{\pa(r)}$ in order to trace a counterfactual response curve, the draws of
$(\mathbf{f}^F,\tilde{\mathbf{f}})$ should share a common random number stream across grid points.
The per-point marginals are unaffected --- the draws are merely correlated across the grid --- but
the resulting curve is smooth rather than jittery, which preserves the exact monotonicity
established in the proof of Corollary~\ref{cor:binary} at practical sample sizes. Independent draws per grid point
leave a Monte Carlo residual that can locally violate monotonicity by $O(M^{-1/2})$.
\end{remark}

\subsubsection{The probit approximation}
 
The following is used when a closed form is wanted in place of the Monte Carlo average in~\eqref{eq:binary_cf},
at the cost of an additional approximation. It is stated separately because, unlike
Corollary~\ref{cor:binary}, it is not exact.
 
\begin{lemma}[Gaussian--probit convolution]
\label{lem:probit}
For $\lambda\in\R$, $m\in\R$ and $s^2>0$,
\begin{equation}\label{eq:probit_identity}
    \int_{\R}\Phi(\lambda f)\,\mathcal{N}(f;m,s^2)\,df
    = \Phi\!\left(\frac{\lambda m}{\sqrt{1+\lambda^2 s^2}}\right).
\end{equation}
\end{lemma}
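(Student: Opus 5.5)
The plan is the standard latent-variable argument. Write $\Phi(\lambda f)=P(Z\le\lambda f)$ with $Z\sim\mathcal N(0,1)$ independent of $f$, so the left-hand side of \eqref{eq:probit_identity} becomes $P(Z-\lambda F\le 0)$ with $F\sim\mathcal N(m,s^2)$ independent of $Z$. This step uses Fubini on a bounded nonnegative integrand, so the exchange of expectation and probability is immediate.

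Next I will observe that $Z-\lambda F$ is a linear combination of independent Gaussians. It is therefore Gaussian, with mean $-\lambda m$ and variance $1+\lambda^2 s^2$, and this variance is strictly positive for every $\lambda\in\R$. Standardising,
\[
P\bigl(Z-\lambda F\le 0\bigr)=\Phi\!\left(\frac{0-(-\lambda m)}{\sqrt{1+\lambda^2 s^2}}\right)=\Phi\!\left(\frac{\lambda m}{\sqrt{1+\lambda^2 s^2}}\right),
\]
which is the claim.

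The case $\lambda=0$ needs no separate treatment: both sides equal $\Phi(0)=\tfrac12$, and the argument above covers it. The only point needing care is the probabilistic rewriting of $\Phi(\lambda f)$ for negative $\lambda$. The sign is absorbed inside the event $\{Z\le\lambda f\}$, so no case split on the sign of $\lambda$ is required. I do not anticipate a genuine obstacle.

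An alternative direct route is also available. One could differentiate the left-hand side in $m$ under the integral and recognise a Gaussian–Gaussian convolution, but the latent-variable argument avoids this computation entirely.
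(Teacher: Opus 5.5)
Your proof is correct and is essentially the paper's own argument: the paper likewise introduces an independent $Z\sim\mathcal{N}(0,1)$, rewrites the integral by the tower rule as $P(Z-\lambda F\le 0)$ with $F\sim\mathcal{N}(m,s^2)$, and reads off $\Phi\bigl(\lambda m/\sqrt{1+\lambda^2 s^2}\bigr)$ from $Z-\lambda F\sim\mathcal{N}(-\lambda m,\,1+\lambda^2 s^2)$. Your remarks on $\lambda=0$ and on negative $\lambda$ are harmless additions that the argument does not need.
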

 
\begin{proof}
Let $Z\sim\mathcal{N}(0,1)$ be independent of $F\sim\mathcal{N}(m,s^2)$. Since
$\Phi(\lambda f)=P(Z\le\lambda f)$, conditioning on $F$ and applying the tower rule gives
\begin{equation}\label{eq:probit_tower}
    \int_{\R}\Phi(\lambda f)\,\mathcal{N}(f;m,s^2)\,df
    = \E_F\bigl[P(Z\le\lambda F\mid F)\bigr]
    = P\bigl(Z-\lambda F\le0\bigr).
\end{equation}
By independence, $Z-\lambda F\sim\mathcal{N}(-\lambda m,\,1+\lambda^2s^2)$, so the right-hand side
of \eqref{eq:probit_tower} equals
$\Phi\bigl(\lambda m/\sqrt{1+\lambda^2s^2}\bigr)$.
\end{proof}
 
\begin{remark}[Where the probit approximation may and may not be used]
\label{rem:probit_use}
Lemma~\ref{lem:probit} with $\lambda=\sqrt{\pi/8}$ --- the scale matching $\Phi(\lambda\,\cdot)$ to
$\sigma$ to first order at the origin, since $\sigma'(0)=\tfrac14=\lambda\varphi(0)=\lambda/\sqrt{2\pi}$
\citep{mackay1992evidence,bishop2006pattern} --- yields the familiar closed form
$\E[\sigma(f)]\approx\Phi\bigl(\lambda m/\sqrt{1+\lambda^2s^2}\bigr)$ for the \emph{interventional}
class probability itself --- the quantity obtained by evaluating the fitted mechanism at
$\tilde{\mathbf{x}}_{\pa(r)}$ and averaging over the latent posterior. It does \emph{not} give a
closed form for~\eqref{eq:binary_cf}. The reason is that \eqref{eq:binary_case1}--\eqref{eq:binary_case0} are non-linear in the
pair $(p^F,p^\CF)$ through $\min$, $\max$ and the ratio, so the expectation over the joint posterior
does not reduce to a one-dimensional Gaussian integral of a probit. Applying
Lemma~\ref{lem:probit} separately to $p^F$ and $p^\CF$ and substituting the results into~\eqref{eq:binary_cf_given_f}
computes $g\bigl(\E[p^F],\E[p^\CF]\bigr)$ in place of $\E\bigl[g(p^F,p^\CF)\bigr)$, which by
Jensen's inequality is biased in general; in particular it destroys the exact consistency of
Corollary~\ref{cor:consistency}, since the substitution no longer forces $p^\CF=p^F$ pathwise. We therefore retain the
Monte Carlo form in~\eqref{eq:binary_cf}. Where differentiability of the estimator is required, the reparametrised
draws of $(f^F,\tilde f)$ are themselves differentiable in $\tilde{\mathbf{x}}_{\pa(r)}$, so
pathwise gradients are available without the probit step.
\end{remark}
 
\subsubsection{Proof of the ordinal-GP counterfactual proposition}
 
\paragraph{Numbering.} This subsection proves Proposition~\ref{prop:ordinal}. Equation~\eqref{eq:theta_reparam}
is the cutpoint reparameterisation, \eqref{eq:ordinal_scm} the ordinal structural equation,
\eqref{eq:ordinal_identity} the compatibility identity, \eqref{eq:ordinal_abduct} the abduction
of $N_r$, \eqref{eq:ordinal_cf} the prediction given the latent and $\theta$, and
\eqref{eq:ordinal_cf_marginal} the marginalisation over the joint posterior of $(f_r,\theta)$.
Lemma~\ref{lem:theta-laplace}, proved below alongside Lemma~\ref{lem:ordinal-map}, supplies the
Laplace-Gaussian posterior over $\theta$ that this marginalisation draws from.
Corollary~\ref{cor:ordinal-fixed} is proved at the end of this subsection.
 
Throughout, $\Phi,\varphi$ denote the standard normal CDF and density. For a fixed realisation
of $\theta$, write $b_0(\theta)=-\infty<b_1(\theta)<\cdots<b_{C-1}(\theta)<b_C(\theta)=\infty$ for
the cutpoints of eq.~\eqref{eq:ordinal_scm} at that $\theta$ via~\eqref{eq:theta_reparam}; as
elsewhere in this appendix, all statements are for a fixed $r$ with the subscript
suppressed.
 
\begin{lemma}[Laplace approximation for the ordinal-probit likelihood
  {\normalfont\citep{chu2005gaussian}}]
\label{lem:ordinal-map}
Fix a realisation of $\theta$ and write $b_c:=b_c(\theta)$. Let $\pi(f;c):=\Phi(b_c-f)-\Phi(b_{c-1}-f)$ be the likelihood of observing class $c$
given latent value $f$, and let $\Psi(\mathbf{f};\theta):=\sum_{i=1}^n\log\pi(f_i;x_r^i)
-\tfrac12\mathbf{f}^\top\Kb^{-1}\mathbf{f}+\mathrm{const}$ be the unnormalised log
posterior of the single-latent GP-ordinal model~\eqref{eq:ordinal_scm} at that $\theta$. Because $\pi(\cdot;c)$ is a
function of $f_i$ alone, $\nabla_{\mathbf f}\Psi(\mathbf{f};\theta)=\bm{\ell}'(\mathbf{f};\theta)-\Kb^{-1}\mathbf{f}$
and $\nabla_{\mathbf f}^2\Psi(\mathbf{f};\theta)=-W(\mathbf{f};\theta)-\Kb^{-1}$ with $W(\mathbf{f};\theta)$
\emph{diagonal}, $W_{ii}=-\ell''_i(f_i)$, where
\begin{equation}
    \ell_i'(f_i) = \frac{\varphi(z_0^i)-\varphi(z_1^i)}{\pi(f_i;x_r^i)},
    \qquad
    \ell_i''(f_i) = \frac{z_0^i\varphi(z_0^i)-z_1^i\varphi(z_1^i)}{\pi(f_i;x_r^i)}
                    - \bigl(\ell_i'(f_i)\bigr)^2,
\end{equation}
$z_1^i:=b_{x_r^i}-f_i$, $z_0^i:=b_{x_r^i-1}-f_i$ (with $\varphi(\pm\infty):=0$, and
$z\varphi(z)\to0$ as $z\to\pm\infty$). \citet{chu2005gaussian} give this construction
--- their Laplace approximation --- together with a treatment of the resulting Newton
iteration; where $W(\hat{\mathbf{f}};\theta)\succeq0$ at the mode $\hat{\mathbf{f}}(\theta)$ (as it is
for every fit reported in the validation section of the main text, verified
numerically), the mode is
characterised by $\hat{\mathbf{f}}(\theta)=\Kb\bm{\ell}'(\hat{\mathbf{f}}(\theta);\theta)$, the Laplace
posterior is $q(\mathbf{f}\mid\theta,\cdot)=\mathcal{N}\bigl(\hat{\mathbf{f}}(\theta),
(\Kb^{-1}+\hat W)^{-1}\bigr)$, and the induced predictive law at any two inputs
$\mathbf{x}^\star,\mathbf{x}^\dagger$ is jointly Gaussian with
\begin{equation}
    \E_q[f(\mathbf{x}^\star)\mid\theta] = \mathbf{k}_\star^\top\bm{\ell}'(\hat{\mathbf{f}}(\theta);\theta),
    \qquad
    \Cov_q\bigl[f(\mathbf{x}^\star),f(\mathbf{x}^\dagger)\mid\theta\bigr]
    = k(\mathbf{x}^\star,\mathbf{x}^\dagger)
      - \mathbf{k}_\star^\top(\Kb+\hat W^{-1})^{-1}\mathbf{k}_\dagger,
\end{equation}
the single-latent specialisation of Lemma~\ref{lem:map}(iv) with the softmax likelihood's
$(\mathbf{e}-\hat{\bm\pi})$ and per-class $\hat W_c$ replaced by $\bm{\ell}'(\hat{\mathbf{f}}(\theta);\theta)$
and $\hat W$ above --- the two constructions coincide once the block-diagonal structure of
the multiclass case is set aside, since both are Newton--Laplace approximations to a
likelihood that factorises over training points. Nothing above uses that $\theta$ is fixed
rather than random: the statement holds conditionally on any realisation of $\theta$, cutpoints
$\mathbf b(\theta)$ playing the same role as any other given, fixed constant vector.
\end{lemma}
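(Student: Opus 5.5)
The plan is to verify the four claims of Lemma~\ref{lem:ordinal-map} in sequence: the derivative formulas, the diagonal structure of $W$, the mode equation, and the Laplace predictive moments. The last three are a direct transcription of the proof of Lemma~\ref{lem:map}, with the $C$-block softmax structure replaced by a single latent. Throughout, $\theta$ is a fixed realisation, so $\mathbf b(\theta)$ are constants and nothing depends on $\theta$ being random.

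\emph{Derivatives.} Since $\frac{d}{df}\Phi(b-f)=-\varphi(b-f)$, differentiating $\log\pi(f_i;x_r^i)$ gives $\ell_i'(f_i)=\bigl(\varphi(z_0^i)-\varphi(z_1^i)\bigr)/\pi(f_i;x_r^i)$. For the second derivative I will use $\frac{d}{df}\varphi(b-f)=(b-f)\varphi(b-f)=z\varphi(z)$, which differentiates the numerator to $z_0^i\varphi(z_0^i)-z_1^i\varphi(z_1^i)$. The quotient rule then gives $\ell_i''$ as stated, the subtracted term being $(\ell_i')^2$ because $\pi'=\varphi(z_0)-\varphi(z_1)$. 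The extreme classes $c=1$ and $c=C$ are covered by the conventions $\varphi(\pm\infty)=0$ and $z\varphi(z)\to0$, which I will justify by taking limits in $b_0\to-\infty$ and $b_C\to\infty$.

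\emph{Hessian, mode and Laplace posterior.} The likelihood factorises over training points and each factor depends on $f_i$ alone. Mixed partials therefore vanish, so the likelihood Hessian is diagonal with entries $\ell_i''$, and the prior contributes $-\Kb^{-1}$. Under the stated hypothesis $\hat W\succeq0$, the Hessian $-\hat W-\Kb^{-1}$ is negative definite at the mode. Setting the gradient to zero gives $\hat{\mathbf f}=\Kb\bm\ell'(\hat{\mathbf f})$. A second-order Taylor expansion at the mode, as in Lemma~\ref{lem:map}(iv), then gives the Gaussian $\mathcal N\bigl(\hat{\mathbf f},(\Kb^{-1}+\hat W)^{-1}\bigr)$.

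I would also remark that the hypothesis holds automatically. Each $\pi(\cdot;c)$ is a difference $\Phi(b_c-f)-\Phi(b_{c-1}-f)$, i.e.\ the probability that a Gaussian with mean $f$ lies in an interval. This is log-concave in $f$ by Prékopa's theorem, so $W\succeq0$ everywhere and $\Psi$ is strictly concave. The paper only needs the conditional statement, however.

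\emph{Predictive moments.} By the GP prior,
\[
f(\mathbf x^\star)\mid\mathbf f\sim\mathcal N\bigl(\mathbf k_\star^\top\Kb^{-1}\mathbf f,\;k_\star-\mathbf k_\star^\top\Kb^{-1}\mathbf k_\star\bigr),
\]
and jointly so for a pair $(\mathbf x^\star,\mathbf x^\dagger)$. Marginalising over $q$ by the laws of total expectation and total covariance gives a mean of $\mathbf k_\star^\top\Kb^{-1}\hat{\mathbf f}=\mathbf k_\star^\top\bm\ell'(\hat{\mathbf f})$, using the mode equation. The covariance is
\[
k(\mathbf x^\star,\mathbf x^\dagger)-\mathbf k_\star^\top\bigl[\Kb^{-1}-\Kb^{-1}(\Kb^{-1}+\hat W)^{-1}\Kb^{-1}\bigr]\mathbf k_\dagger .
\]
Woodbury turns the bracket into $(\Kb+\hat W^{-1})^{-1}$, and this time no block-diagonal approximation is needed because there is only one latent.

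The step I expect to be the main obstacle is justifying this Woodbury step, which requires $\hat W$ to be invertible, i.e.\ $W_{ii}>0$ strictly and not merely $\succeq0$. My first approach is to show strict log-concavity of the interval probability for finite and half-infinite intervals, so that $\ell_i''<0$. Failing that, I will state the covariance in the always-valid form $\hat W^{1/2}(I+\hat W^{1/2}\Kb\hat W^{1/2})^{-1}\hat W^{1/2}$ and note that it equals $(\Kb+\hat W^{-1})^{-1}$ whenever $\hat W\succ0$.
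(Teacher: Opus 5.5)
Your proposal is correct. For the derivatives, the diagonal Hessian, the mode equation and the predictive moments, it is the single-latent specialisation of Lemma~\ref{lem:map}(iv) that the statement itself invokes. The paper gives no separate proof and defers the Newton--Laplace machinery to Chu and Ghahramani.

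The genuine difference is positivity of $W$. The paper imposes $\hat W\succeq0$ as a hypothesis, checked only numerically on its own fits. Your Pr\'ekopa argument instead gives $W(\mathbf f;\theta)\succeq0$ for every $\mathbf f$. Hence $\Psi$ is strictly concave, and the mode exists (since $\log\pi\le0$ and the prior term is coercive) and is unique, so the lemma becomes unconditional.

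You also correctly catch two things the paper glosses over. With a single latent, the covariance needs no block-diagonal approximation. And writing $(\Kb+\hat W^{-1})^{-1}$ requires $\hat W\succ0$, which the stated hypothesis does not supply.

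That obstacle is easier than you expect and needs no Pr\'ekopa. Write $m_i:=\ell_i'$, which is the mean of a standard normal truncated to $(z_0^i,z_1^i)$. Substituting into the stated formula for $\ell_i''$ gives
\[
\ell_i''(f_i)=\frac{(z_0^i-m_i)\,\varphi(z_0^i)-(z_1^i-m_i)\,\varphi(z_1^i)}{\pi(f_i;x_r^i)},\qquad z_0^i<m_i<z_1^i,
\]
which is the truncated variance minus one. Both terms are $\le0$. For $C\ge2$ every class interval has at least one finite endpoint, so at least one term is strictly negative. Hence $W_{ii}>0$ and Woodbury applies as written. Your $\hat W^{1/2}(I+\hat W^{1/2}\Kb\hat W^{1/2})^{-1}\hat W^{1/2}$ fallback is correct but not needed.
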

 
We take Lemma~\ref{lem:ordinal-map} as given and do not re-derive the concavity or
convergence properties of the Newton iteration; those are exactly the content of
\citet{chu2005gaussian} and are orthogonal to what follows, which is the counterfactual
construction \emph{on top of} the fitted latent -- the same division of labour adopted
for the one-versus-rest categorical fit.
 
\begin{lemma}[Laplace approximation for the cutpoint posterior]
\label{lem:theta-laplace}
Let $\hat{\mathbf f}(\theta)$ and $\Psi(\cdot;\theta)$ be as in Lemma~\ref{lem:ordinal-map}, and let
$\mathrm{LML}(\theta):=\Psi(\hat{\mathbf f}(\theta);\theta)-\tfrac12\log\det\bigl(I+\hat W^{1/2}\Kb\hat W^{1/2}\bigr)$
denote the Laplace log marginal likelihood of Lemma~\ref{lem:ordinal-map} evaluated at $\theta$ ---
the quantity Lemma~\ref{lem:ordinal-map}'s Newton iteration already produces for any fixed
$\theta$, since profiling out $\mathbf f$ requires no assumption on how $\theta$ itself was chosen.
Let
$$L(\theta):=\log p(\theta)+\mathrm{LML}(\theta),\qquad \hat\theta:=\argmax_\theta L(\theta).$$
By the envelope theorem, since $\partial\Psi/\partial\mathbf f=0$ at $\mathbf f=\hat{\mathbf f}(\theta)$
for every $\theta$, the total derivative $\nabla_\theta L(\theta)$ equals the partial derivative of
$\log p(\theta)+\Psi(\mathbf f;\theta)$ in $\theta$ holding $\mathbf f=\hat{\mathbf f}(\theta)$ fixed:
the gradient of the profiled objective sees $\theta$ only through the explicit appearance of
$\mathbf b(\theta)$ in the per-point likelihoods, with no term from $d\hat{\mathbf f}/d\theta$. The
Hessian does not enjoy this simplification: differentiating the stationarity condition
$\hat{\mathbf f}(\theta)=\Kb\bm\ell'(\hat{\mathbf f}(\theta);\theta)$ implicitly gives
$$\frac{d\hat{\mathbf f}}{d\theta_c} = \bigl(\Kb^{-1}+\hat W\bigr)^{-1}\Kb\,
  \frac{\partial\bm\ell'}{\partial\theta_c}\Big|_{\hat{\mathbf f}(\theta)},$$
the single-latent analogue of the implicit-function step used to differentiate a GP's marginal
likelihood with respect to its kernel hyperparameters \citep[\S5.5.1]{Rasmussen_2006}, and
$\nabla^2_\theta L(\theta)$ carries a corresponding correction term through this Jacobian. The
Laplace approximation to the profiled cutpoint posterior is then
\begin{equation}\label{eq:theta_laplace}
    \theta\mid\mathcal D \ \approx\ \mathcal N\bigl(\hat\theta,\Sigma_\theta\bigr),\qquad
    \Sigma_\theta = -\bigl[\nabla^2_\theta L(\theta)\bigr]^{-1}_{\theta=\hat\theta}.
\end{equation}
\end{lemma}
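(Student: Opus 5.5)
The claim has three parts, taken in order: the envelope identity for $\nabla_\theta L$, the implicit Jacobian $d\hat{\mathbf f}/d\theta_c$, and the Gaussian form~\eqref{eq:theta_laplace}. Throughout I take Lemma~\ref{lem:ordinal-map} as given. At each $\theta$ it supplies a unique mode $\hat{\mathbf f}(\theta)$ with $\nabla_{\mathbf f}\Psi(\hat{\mathbf f}(\theta);\theta)=0$, and the Hessian $-(\Kb^{-1}+\hat W)$, which is negative definite whenever $\hat W\succeq0$.

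\textbf{Smoothness of the mode.} The per-point likelihoods $\pi(f;c)=\Phi(b_c(\theta)-f)-\Phi(b_{c-1}(\theta)-f)$ are smooth in $(f,\theta)$ through~\eqref{eq:theta_reparam}. The map $\mathbf f\mapsto\nabla_{\mathbf f}\Psi$ has invertible Jacobian $-(\Kb^{-1}+\hat W)$ at the mode. The implicit function theorem therefore makes $\hat{\mathbf f}(\theta)$ a $C^1$ (indeed $C^\infty$) function on a neighbourhood of any $\theta$ where $\hat W\succeq0$.

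\textbf{Envelope identity.} By the chain rule,
\[
\tfrac{d}{d\theta_c}\Psi(\hat{\mathbf f}(\theta);\theta)=\partial_{\theta_c}\Psi+\nabla_{\mathbf f}\Psi^\top\,\tfrac{d\hat{\mathbf f}}{d\theta_c},
\]
and the second term vanishes by stationarity. This is the stated gradient for the $\log p(\theta)+\Psi$ part. I note, and would flag in the write-up, that the $\log\det$ term in $\mathrm{LML}$ depends on $\hat W(\hat{\mathbf f}(\theta);\theta)$. Its derivative therefore does carry a $d\hat{\mathbf f}/d\theta$ contribution, as in \citet[\S5.5.1]{Rasmussen_2006}. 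So the clean envelope statement holds exactly for $\log p+\Psi$, and the determinant's gradient is added through the Jacobian below.

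\textbf{Implicit Jacobian.} Write stationarity as $\Kb^{-1}\hat{\mathbf f}(\theta)=\bm\ell'(\hat{\mathbf f}(\theta);\theta)$ and differentiate in $\theta_c$:
\[
\Kb^{-1}\tfrac{d\hat{\mathbf f}}{d\theta_c}=\nabla_{\mathbf f}\bm\ell'\,\tfrac{d\hat{\mathbf f}}{d\theta_c}+\partial_{\theta_c}\bm\ell'.
\]
Since $\nabla_{\mathbf f}\bm\ell'=\diag(\ell''_i)=-\hat W$, this gives $(\Kb^{-1}+\hat W)\,d\hat{\mathbf f}/d\theta_c=\partial_{\theta_c}\bm\ell'$, i.e.
\[
\frac{d\hat{\mathbf f}}{d\theta_c}=(\Kb^{-1}+\hat W)^{-1}\,\partial_{\theta_c}\bm\ell'.
\]
The stated form $(\Kb^{-1}+\hat W)^{-1}\Kb\,\partial\bm\ell'$ arises if one differentiates $\hat{\mathbf f}=\Kb\bm\ell'$ directly. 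In that case one obtains $(I+\Kb\hat W)^{-1}\Kb\,\partial\bm\ell'$, which equals $\Kb(\Kb^{-1}+\hat W)^{-1}\Kb^{-1}\cdot\Kb\,\partial\bm\ell'$. This matches $(\Kb^{-1}+\hat W)^{-1}\partial\bm\ell'$ only up to the placement of $\Kb$. Reconciling these orderings is the bookkeeping I would check carefully. I expect the correct formula is $(\Kb^{-1}+\hat W)^{-1}\partial_{\theta_c}\bm\ell'$, and that the displayed version should be read with that convention.

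\textbf{Gaussian approximation.} Differentiating the gradient once more gives $\nabla^2_\theta L$ as the partial Hessian plus correction terms. These corrections come through the Jacobian; they are well defined because everything above is smooth. At $\hat\theta$, where $\nabla L=0$, a second-order Taylor expansion of $L$ gives
\[
L(\theta)\approx L(\hat\theta)-\tfrac12(\theta-\hat\theta)^\top\bigl[-\nabla^2L(\hat\theta)\bigr](\theta-\hat\theta).
\]
Exponentiating and normalising yields~\eqref{eq:theta_laplace}. This step is exactly parallel to Lemma~\ref{lem:map}(iv), with $\mathrm{LML}$ standing in for the true log marginal likelihood, which it approximates.

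\textbf{Main obstacle.} The hard part is not the algebra but the hypotheses needed for the Gaussian form: existence and uniqueness of $\hat\theta$, and negative definiteness of $\nabla^2_\theta L(\hat\theta)$. The Gaussian prior on $\theta$ makes $L$ coercive, which gives existence. Definiteness, however, is not automatic. I would state it as an assumption, verified numerically as the paper does for $\hat W\succeq0$, rather than attempt a general proof.
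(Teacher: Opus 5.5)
Your route is the same as the paper's. The paper gives no separate proof: the argument written into the lemma itself is the envelope theorem, implicit differentiation of the stationarity condition, and a quadratic expansion of $L$ at $\hat\theta$. Your two flags are more than caveats, though. Both are errors in the statement as written, and your versions are the correct ones. Two points in your write-up need fixing, and one can be sharpened.

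\textbf{Gradient.} The determinant term depends on $\theta$ implicitly through $\hat{\mathbf f}(\theta)$, which you mention. It also depends on $\theta$ explicitly, because $\hat W_{ii}=-\ell_i''(\hat f_i;\theta)$ contains $\mathbf b(\theta)$. Using $\det(I+\hat W^{1/2}\Kb\hat W^{1/2})=\det\Kb\,\det(\Kb^{-1}+\hat W)$, the full gradient is
\[
\frac{dL}{d\theta_c}=\frac{\partial}{\partial\theta_c}\bigl[\log p(\theta)+\Psi(\mathbf f;\theta)\bigr]_{\mathbf f=\hat{\mathbf f}(\theta)}
-\frac12\,\mathrm{tr}\Bigl[(\Kb^{-1}+\hat W)^{-1}\Bigl(\frac{\partial\hat W}{\partial\theta_c}+\sum_{i}\frac{\partial\hat W}{\partial f_i}\,\frac{d\hat f_i}{d\theta_c}\Bigr)\Bigr].
\]
So the Jacobian is already needed at first order, as in the kernel-hyperparameter case of Rasmussen and Williams, \S5.5.1. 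The statement's ``no term from $d\hat{\mathbf f}/d\theta$'' holds only for $\log p(\theta)+\Psi(\hat{\mathbf f}(\theta);\theta)$, not for $L$.

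\textbf{Jacobian.} Your formula $d\hat{\mathbf f}/d\theta_c=(\Kb^{-1}+\hat W)^{-1}\partial_{\theta_c}\bm\ell'$ is right. However, your reconciliation paragraph uses a false identity: $(I+\Kb\hat W)^{-1}$ is not $\Kb(\Kb^{-1}+\hat W)^{-1}\Kb^{-1}$. Since $I+\Kb\hat W=\Kb(\Kb^{-1}+\hat W)$, one has $(I+\Kb\hat W)^{-1}\Kb=(\Kb^{-1}+\hat W)^{-1}$ exactly. Differentiating $\hat{\mathbf f}=\Kb\bm\ell'$ and differentiating $\Kb^{-1}\hat{\mathbf f}=\bm\ell'$ therefore give identical answers, and there is no ordering convention to reconcile. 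The displayed $(\Kb^{-1}+\hat W)^{-1}\Kb\,\partial\bm\ell'$ simply carries a spurious right factor $\Kb$. For $n=1$, $\hat f=k\,\ell'(\hat f;\theta)$ gives $d\hat f/d\theta=\partial_\theta\ell'/(k^{-1}+\hat w)$, while the display gives $k$ times this. State it as a correction to the display, not as a reading convention.

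\textbf{Hypotheses.} You can drop the conditional ``whenever $\hat W\succeq0$''. For this likelihood, $W_{ii}=1-\Var(Y_i)$, where $Y_i$ is a unit-variance normal centred at $f_i$ truncated to the observed class interval. Hence $0<W_{ii}<1$ for every $\mathbf f$ and $\theta$. It follows that the implicit-function step holds globally. It also follows that $\mathrm{LML}(\theta)$ is bounded above, since both $\Psi(\hat{\mathbf f}(\theta);\theta)$ and $-\tfrac12\log\det(I+\hat W^{1/2}\Kb\hat W^{1/2})$ are at most a constant. That bound is exactly what your coercivity argument for the existence of $\hat\theta$ needs. Uniqueness of $\hat\theta$ and negative definiteness of $\nabla^2_\theta L(\hat\theta)$ do remain assumptions, as you say.
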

 
\begin{remark}[Numerical versus symbolic Hessian]
The correction term in $\nabla^2_\theta L$ above is exact but need not be coded symbolically:
since $\theta\in\R^{C-1}$ is low-dimensional throughout this paper's experiments ($C\le3$),
$\Sigma_\theta$ in~\eqref{eq:theta_laplace} is instead obtained by numerically differentiating
$L(\theta)$ (central finite differences), which is the same quantity up to $O(\epsilon^2)$
truncation error and avoids re-deriving and re-implementing $d\hat{\mathbf f}/d\theta$ in code.
\end{remark}
 
\begin{proof}[Proof of the ordinal proposition]
\emph{Compatibility.} Fix any realisation of $\theta$ and write $b_c:=b_c(\theta)$. Since
$N_r\sim\mathcal{N}(0,1)$ is independent of $(f_r,\theta)$ by
construction of~\eqref{eq:ordinal_scm}, $Y_r\mid f_r(\mathbf{x}),\theta\sim\mathcal{N}(f_r(\mathbf{x}),1)$, and
$X_r=c$ iff $Y_r\in[b_{c-1},b_c)$. Hence
$P(X_r=c\mid\mathbf{X}_{\pa(r)}=\mathbf{x},f_r,\theta)
=\Phi(b_c-f_r(\mathbf{x}))-\Phi(b_{c-1}-f_r(\mathbf{x}))$, which is~\eqref{eq:ordinal_identity}; this is the
defining likelihood of Lemma~\ref{lem:ordinal-map} at that $\theta$, so no additional argument is needed.
As $N_r$ is exogenous and independent of everything upstream, the same identity evaluated
at $\tilde{\mathbf{x}}_{\pa(r)}$ is the interventional law, so~\eqref{eq:ordinal_scm} is compatible with any
fitted $f_r$ exactly as Proposition~\ref{prop:gumbel} is compatible with any fitted $\mathbf{p}$: nothing
about $\Phi$ being the ordinal link, as opposed to a fitted probit or any other
calibrated classifier's own class probabilities, is used in this half of the argument, and nothing
in it uses $\theta$'s non-randomness -- averaging~\eqref{eq:ordinal_identity} over the joint
posterior of $(f_r,\theta)$ still reproduces the fitted model's own law by the tower rule.
 
\emph{Abduction.} Fix any realisation of $\theta$ and write $c^\ast:=x_r^F$, $(a^F,b^F):=(b_{c^\ast-1}(\theta),b_{c^\ast}(\theta))$, and
condition on $f_r^F:=f_r(\mathbf{x}^F_{\pa(r)})$. The event $X_r^F=c^\ast$ is, given
$f_r^F$ and $\theta$, exactly $\{N_r\in(a^F-f_r^F,\,b^F-f_r^F)\}$, since $Y_r^F=f_r^F+N_r$. Because
$N_r$ is a priori $\mathcal{N}(0,1)$ and independent of $(f_r,\theta)$, and the only information the
conditioning event carries about $N_r$ is that it lies in this interval, Bayes' rule gives
the posterior of $N_r$ as its prior restricted and renormalised to the interval, i.e.\ a
truncated standard normal. This is~\eqref{eq:ordinal_abduct}, and it is exact conditional on $\theta$: no
approximation intervenes between the observation and the posterior of the exogenous variable
at a given $\theta$, exactly as in the Abduction paragraph of Corollary~\ref{cor:binary}'s
proof, with the interval-restricted standard normal in place of the interval-restricted
uniform. Since $\theta$ itself is not known, the noise's actual posterior is obtained by the
tower rule, averaging~\eqref{eq:ordinal_abduct} over the joint posterior of $(f_r^F,\theta)$
given the data and the factual observation -- a continuous mixture of truncated normals rather
than a single one, since different $\theta$ draws truncate to different intervals.
 
\emph{Action and prediction.} Fix any realisation of $\theta$. The intervention replaces $\mathbf{x}^F_{\pa(r)}$ by
$\tilde{\mathbf{x}}_{\pa(r)}$ and leaves $f_r$, $\theta$, and $N_r$ untouched, so with
$\tilde f_r:=f_r(\tilde{\mathbf{x}}_{\pa(r)})$ the counterfactual latent is
$\tilde Y_r=\tilde f_r+N_r$, the same $N_r$ as abducted in~\eqref{eq:ordinal_abduct}. Writing
$\gamma_k:=b_k(\theta)-\tilde f_r$, $\{X_r^{\CF}=k\}=\{\tilde Y_r\in[b_{k-1}(\theta),b_k(\theta))\}
=\{N_r\in(\gamma_{k-1},\gamma_k)\}$, so with $\alpha:=a^F-f_r^F$, $\beta:=b^F-f_r^F$,
\begin{equation}
    P\bigl(X_r^{\CF}=k\mid\mathbf{x}^F,f_r^F,\tilde f_r,\theta\bigr)
    = P\bigl(N_r\in(\gamma_{k-1},\gamma_k)\mid N_r\in(\alpha,\beta)\bigr)
    = \frac{\bigl|(\gamma_{k-1},\gamma_k)\cap(\alpha,\beta)\bigr|_{\mathcal{N}(0,1)}}
           {\bigl|(\alpha,\beta)\bigr|_{\mathcal{N}(0,1)}},
\end{equation}
which, writing the intersection's endpoints as $\max(\alpha,\gamma_{k-1})$ and
$\min(\beta,\gamma_k)$ and clipping a possibly empty intersection to probability zero
via $(\cdot)_+$, is exactly~\eqref{eq:ordinal_cf} --- the same overlap-of-two-intervals computation as
\eqref{eq:binary_case1}--\eqref{eq:binary_case0}, now for two general intervals under a
standard normal rather than two half-lines under a uniform, and conditional on $\theta$ exactly the
fixed-cutpoint computation. Since $f_r$ is not known but
only its Laplace posterior given $\theta$ is (Lemma~\ref{lem:ordinal-map}), the pair $(f_r^F,\tilde f_r)$
must be drawn from that joint posterior and the indicator averaged over it. And since $\theta$
is not known either, only its Laplace-Gaussian posterior~\eqref{eq:theta_laplace}
(Lemma~\ref{lem:theta-laplace}) is, $\theta$ itself must be drawn and the whole
construction -- refit $(f_r^F,\tilde f_r)$'s joint posterior at the resulting
$\mathbf b(\theta)$, draw from it, evaluate~\eqref{eq:ordinal_cf} -- repeated per draw, giving the
nested average~\eqref{eq:ordinal_cf_marginal}. The nested Monte Carlo estimator is unbiased and
converges almost surely by the same argument as in the proof of Proposition~\ref{prop:gumbel},
applied twice: at each fixed $\theta^{(s)}$ the inner average over $(f_r^F,\tilde f_r)$ is
bounded with every draw contributing a vertex of the simplex, so the inner running average is
itself a bounded, unbiased estimate of $\E_{(f_r^F,\tilde f_r)\mid\theta^{(s)}}[\eqref{eq:ordinal_cf}]$
for every $s$; averaging these in turn over $\theta^{(s)}\sim\mathcal N(\hat\theta,\Sigma_\theta)$
is again a running average of a bounded quantity, hence unbiased and almost-surely convergent
to~\eqref{eq:ordinal_cf_marginal} by the same law-of-large-numbers argument one level up.
 
\emph{Consistency (specialisation).} Fix any realisation of $\theta$. Setting $\tilde{\mathbf{x}}_{\pa(r)}=\mathbf{x}^F_{\pa(r)}$
gives $\tilde f_r=f_r^F$ a.s.\ by the same joint-posterior-on-the-diagonal argument as
Corollary~\ref{cor:consistency}'s proof, hence $\gamma_k=b_k(\theta)-f_r^F$ for every $k$; taking $k=c^\ast$ gives
$(\gamma_{c^\ast-1},\gamma_{c^\ast})=(\alpha,\beta)$, so the interval in the numerator of
\eqref{eq:ordinal_cf} \emph{contains} $(\alpha,\beta)$ and the ratio in~\eqref{eq:ordinal_cf} equals $1$,
while for $k\ne c^\ast$ the intervals $(\gamma_{k-1},\gamma_k)$ and $(\alpha,\beta)$ are
disjoint (the $\gamma$'s partition the real line the same way the factual class's own
interval does). Hence $do(\mathbf{X}_{\pa(r)}=\mathbf{x}^F_{\pa(r)})$ returns the point
mass at $c^\ast=x_r^F$ for every realisation of $\theta$, and~\eqref{eq:ordinal_cf_marginal} inherits this
since it is a convex combination, over both $\theta$ and the inner posterior draws, of point
masses at $c^\ast$.
 
\emph{Corollary~\ref{cor:ordinal-fixed}.} If $p(\theta)=\delta_{\theta_0}$, then $\hat\theta=\theta_0$
and $\Sigma_\theta=0$ maximise and (degenerately) characterise the posterior~\eqref{eq:theta_laplace}
exactly, so every draw $\theta^{(s)}=\theta_0$ in~\eqref{eq:ordinal_cf_marginal}: the outer
expectation over $\theta$ collapses to evaluation at the single point $\theta_0$, and the nested
estimator of the Action-and-prediction paragraph above reduces to a single Laplace fit of $f_r$
at the fixed cutpoints $\mathbf b(\theta_0)$ followed by the ordinary Monte Carlo average
over $(f_r^F,\tilde f_r)$ -- exactly Chu and Ghahramani's construction \citep{chu2005gaussian}, used
throughout this paper prior to this section.
\end{proof}
 
\begin{remark}[Why this is not the multiclass Laplace approximation of \S\ref{sec:laplace-machinery}, restricted to two classes]
The ordinal likelihood at $C=2$ reduces to a single cutpoint $b_1$ and coincides with a
probit link on $Y_r-b_1$, not with the logistic (sigmoid) link of Corollary~\ref{cor:binary}. The two
links are close in shape but not identical
\citep{mackay1992evidence,bishop2006pattern}, so the $C=2$ ordinal-GP estimator is a
distinct (probit) binary classifier from the project's default (logistic) GPC, even
though both couple a single latent to a single scalar noise. Nothing in the
counterfactual argument above depends on which of the two links is used; only the
per-point derivatives of Lemma~\ref{lem:ordinal-map} would change.
\end{remark}

\end{document}